\newcommand{\secref}[1]{Section~\ref{#1}}
\newcommand{\figref}[1]{Figure~\ref{#1}}
\newcommand{\lemref}[1]{Lemma~\ref{#1}}
\newcommand{\thmref}[1]{Thm.~\ref{#1}}
\newcommand{\propref}[1]{Proposition~\ref{#1}}
\newcommand{\appref}[1]{Appendix~\ref{#1}}
\newcommand{\ignore}[1]{}
\newcommand{\reals}{\mathbb{R}}
\newcommand{\be}{\mathbf{e}}
\newcommand{\bx}{\mathbf{x}}
\newcommand{\bu}{\mathbf{u}}
\newcommand{\bv}{\mathbf{v}}
\newcommand{\bh}{\mathbf{h}}
\newcommand{\norm}[1]{\|#1\|}
\newcommand{\inner}[1]{\langle#1\rangle}
\newcommand{\pp}{\boldsymbol{p}}
\newcommand{\vv}{\mathbf{v}}
\newtheorem{theorem}{Theorem}[section]
\newtheorem{proposition}[theorem]{Proposition}
\newtheorem{lemma}[theorem]{Lemma}
\title{When Can Transformers Count to n?}
\author[1]{Gilad Yehudai}
\author[2,3]{Haim Kaplan}
\author[2]{Guy Dar}
\author[4]{Royi Rassin}
\author[5]{Asma Ghandeharioun}
\author[2,3]{Mor Geva}
\author[2,3]{Amir Globerson}
\affil[1]{NYU Center for Data Science}
\affil[2]{School of Computer Science, Tel Aviv University}
\affil[3]{Google Research}
\affil[4]{Bar-Ilan University}
\affil[5]{Google DeepMind}
\date{} 
\begin{document}
\maketitle

\begin{abstract}
Large language models based on the transformer architecture can solve highly complex tasks, yet their fundamental limitations on simple algorithmic problems remain poorly understood. In this work, we focus on basic counting tasks and investigate how the difficulty of these tasks scales with the transformer embedding dimension, the context length, and the vocabulary size. We reveal a sharp theoretical phase transition governed by the relationship between the embedding dimension and the vocabulary size. When the dimension is at least as large as the vocabulary, transformers can perfectly maintain token counts. However, when the vocabulary exceeds the embedding dimension, the interference between non-orthogonal token representations forces the network weights to scale polynomially. This renders the exact counting algorithm numerically unstable and practically unlearnable. We empirically validate this bottleneck by training transformers from scratch, demonstrating a strict performance drop at the theoretical threshold and catastrophic out of distribution failure when scaling the vocabulary or context length. Furthermore, we show that state-of-the-art pretrained models suffer from similar failure cases. Our work reveals a critical blind spot absent from the current literature regarding the connection among these three parameters, proving that vocabulary size fundamentally dictates the difficulty of counting tasks.
\end{abstract}

\section{Introduction}

Large language models (LLMs) based on the transformer architecture can solve highly complex tasks. Given these successes, a key question arises regarding their fundamental limits and the simple algorithmic tasks they inherently struggle to solve. Empirical probing has highlighted several vulnerabilities, including the needle in a haystack problem \citep{needle,ivgi2023efficient} and length extrapolation failures \citep{levy2024task}. A complementary approach is to use theoretical studies to chart the computational capabilities of transformers \citep{sanford2024representational,wei2022statistically}. In this work, we focus on basic counting tasks. We specifically consider \textbf{Query Count} (QC) and \textbf{Most Frequent Element} (MFE). In the QC task, the model is asked how many times a given query token appears in a sequence. For example, given the sequence \texttt{a a b b a c c d a}, the model needs to count how many times the letter `a' appears. In the MFE task, the model must output the count of the most frequent token overall.

Counting is a natural module on which other algorithms can build. It appears closely related to the needle in a haystack problem where the goal is to locate a specific string in a long context. However, counting is a strictly harder task. The needle in a haystack problem can be solved theoretically by a fixed size transformer regardless of the context length or vocabulary size, since a single attention head can detect and extract a matching token using induction heads \citep{olsson2022context}. Counting requires the model to aggregate information globally and extract a precise integer representation. To understand this difficulty, we must examine three key parameters: the embedding dimension of the transformer $d$, the length of the input context $n$, and the size of the vocabulary $m$. The interplay between these variables introduces a severe and previously under-explored architectural bottleneck.

Counting tasks are challenging for LLMs \citep{barbero2024transformersneedglassesinformation}, and evaluating them out of distribution (OOD) often leads to failure \citep{dziri2023faith}. We wish to understand the theoretical origins of this difficulty and whether it stems from a lack of architectural expressivity or an optimization failure. Our first set of contributions is theoretical. We map the expressivity landscape of transformers on counting tasks and reveal a sharp phase transition based on the relationship between the embedding dimension $d$ and the vocabulary size $m$:

\begin{itemize}[leftmargin=*]
    \item \textbf{The $d \geq m$ regime.} We prove that when the embedding dimension is at least as large as the vocabulary, transformers can easily solve counting tasks using orthogonal token embeddings to construct a perfect context histogram.
    \item \textbf{The $d < m$ regime.} When the vocabulary exceeds the embedding dimension, the token embeddings must be non-orthogonal and geometrically crowded. For the QC task, we show a solution still exists, but we prove it requires an MLP whose width scales linearly with the context length $n$. For the MFE task, we establish a hard communication complexity lower bound and prove that a 1-layer transformer cannot solve the task if $d \ll m$.
    \item \textbf{Exploding weights and Lipschitz bounds.} For general deep transformers in the $d < m$ regime, we prove that resolving the geometric crowding of tokens forces the network Lipschitz constant, and consequently its weights, to scale polynomially with $m$. This implies that while theoretical solutions exist, they become numerically unstable and virtually unlearnable.
\end{itemize}

Our second set of contributions is empirical and designed to directly validate these theoretical bottlenecks. We demonstrate that the theoretical limitations dictate practical model performance across three distinct settings:

\begin{enumerate}[leftmargin=*]
    \item \textbf{Controlled training from scratch.} We train small transformers on counting tasks while varying the embedding dimension, vocabulary size, and context length. We observe a clear phase transition in task accuracy that perfectly aligns with our theoretical $d=m$ threshold. This confirms that the network cannot easily learn to bypass the geometric crowding bottleneck.
    \item \textbf{Out of distribution generalization.} We demonstrate that models trained on small vocabularies or short contexts fail to generalize when evaluated on larger vocabularies or longer contexts. Even when the model perfectly memorizes a counting heuristic for a restricted domain, the underlying algorithm it learns does not extrapolate to the dense $d < m$ regime.
    \item \textbf{Pretrained language models.} We evaluate pretrained models from the Gemini and Gemma families and show they suffer from similar vulnerabilities. When the vocabulary size is artificially restricted to a binary alphabet, performance remains strong. However, as the vocabulary size grows to encompass natural language distributions, counting performance degrades significantly independent of the context length.
\end{enumerate}

Taken together, our theoretical and empirical results highlight a crucial blind spot in the evaluation of long context models. While the community heavily focuses on scaling context length, the vocabulary size fundamentally dictates the difficulty of aggregation tasks. Our findings suggest that relying on standard transformer attention is insufficient for reliable counting, and delegating to external tools like code execution is necessary.

\section{Related Work}
Since the introduction of transformer architectures \citep{vaswani2017attention} and the success of LLMs, there has been much work on evaluating their performance on various tasks \citep[e.g., see][]{srivastava2023beyond}. In particular, much recent work has explored performance on long context tasks, the dependence of accuracy on context length \citep{levy2024task}, and the ability of models to extrapolate to lengths beyond those seen at training \citep{anil2022exploring}. The fact that models often do not perform well on these evaluations has prompted works that try to pinpoint the inherent limitations of transformer models.

Crucially, a common thread in these works is that they primarily focus on how the model scales with context length or network depth of the model, while assuming a small, fixed vocabulary size (e.g., a binary alphabet). This leaves the geometrical and informational bottlenecks caused by the interplay between the embedding dimension and a growing vocabulary size largely unexplored.

A related line of work is to relate transformers to complexity classes. For example, it has been shown that transformers can efficiently approximate Turing machines \citep{wei2022statistically}, and that transformers with bounded precision can only solve problems in uniform $TC^0$ \citep{merrill2023parallelism}. Chain-of-Thought \citep{wei2022chain} has also been analyzed from an expressiveness viewpoint, demonstrating it can substantially improve the expressive power of transformers \citep{feng2024towards}. Our focus is not on the general capabilities of transformers but rather on a specific, seemingly simple problem, and on the ability of transformers to solve it as the vocabulary size grows.

Several recent works have also studied counting in transformers, although they do not account for the connection between counting and the vocabulary size. \citet{behrens2024counting} studies counting in small transformers, providing theoretical upper bounds and identifying relation-based and inventory-based counting strategies. \citet{zhang2024counting} study the effect of tokenization for counting tasks, while \citet{chang2024language,chiang2022overcoming} provide empirical evidence of an inductive bias toward the success of models in counting inductively. From a formal expressivity standpoint, \citet{salzer2025counting} provide a framework for the counting power of transformers, demonstrating their ability to capture complex, semialgebraic counting properties. Furthermore, from a mechanistic interpretability perspective, \citet{golkar2024contextual} investigate transformers on a ``contextual counting'' task, revealing how architectural choices like causal masking affect quantitative localization. Despite these theoretical and empirical advances, the specific geometric bottleneck introduced when the vocabulary size exceeds the embedding dimension remains unaddressed.

\section{Setting and Problem Setup}
\label{sec:problem_setup}

\paragraph{Notation.}
We use bold lowercase letters (e.g., $\bv$) to denote vectors and capital letters (e.g., $M$) to denote matrices. We denote the set of integers $\{1, \dots, N\}$ by $[N]$. The standard basis vectors in $\mathbb{R}^d$ are denoted by $\be_1, \dots, \be_d$. 
\subsection{Transformers}
We consider a transformer architecture taking as input a sequence of $n$ tokens $x_1, \ldots, x_n \in [m]$ from a vocabulary of size $m$. We denote by $X^{(0)} \in \mathbb{R}^{D \times n}$ the matrix where the $i$-th column is the embedding of the $i$-th token, summed with its positional embedding: $X^{(0)}_i = \vv_{x_i} + \pp_i$.

Each layer $\ell \in [L]$ applies a multi-head self-attention mechanism followed by a position-wise feed-forward network (MLP). We denote the input to layer $\ell$ by $X^{(\ell-1)}$. The self-attention mechanism at layer $\ell$ with $H$ heads is parameterized by matrices $W_Q^{(h, \ell)}, W_K^{(h, \ell)} \in \mathbb{R}^{d \times D}$ and $W_V^{(h, \ell)} \in \mathbb{R}^{D \times D}$ for each head $h \in [H]$. 

We first define the attention scores matrix $A^{(h, \ell)} \in \mathbb{R}^{n \times n}$ for head $h$ as:
\begin{equation}
    A^{(h, \ell)} = \text{SM}\left( \frac{(W_K^{(h, \ell)} X^{(\ell-1)})^\top (W_Q^{(h, \ell)} X^{(\ell-1)})}{\sqrt{d}} \right)
\end{equation}
where $\text{SM}(\cdot)$ denotes the row-wise softmax operator. The output of the multi-head attention mechanism, $Z^{(\ell)} \in \mathbb{R}^{D \times n}$, is then computed by aggregating the value heads:
\begin{equation}
    Z^{(\ell)} = \sum_{h=1}^H W_V^{(h, \ell)} X^{(\ell-1)} (A^{(h, \ell)})^\top
\end{equation}
This is followed by a residual connection, such that the intermediate representation is $\tilde{X}^{(\ell)} = Z^{(\ell)} + X^{(\ell-1)}$.

Finally, we apply an MLP, denoted by $\mathcal{N}^{(\ell)}: \mathbb{R}^D \to \mathbb{R}^D$, which operates on each token (column) independently. The MLP consists of two linear transformations with a ReLU activation. The output of the layer is $X^{(\ell)} = \tilde{X}^{(\ell)} + \mathcal{N}^{(\ell)}(\tilde{X}^{(\ell)})$.

We assume that all calculations, including the softmax and intermediate accumulations, are performed with a finite precision of $p$ bits. The final output of the transformer is $X^{(L)}$. Most of our theoretical constructions focus on a simplified architecture consisting of a single layer ($L=1$) and a single attention head ($H=1$). In this regime, we omit the layer and head indices for brevity, and the model dimension satisfies $D=d$. Additionally, our theoretical analysis does not explicitly include normalization layers. We note that this does not limit the generality of our results, as it is possible to implement degenerate normalization layers that act as the identity function (see, e.g., \citet{sanford2024representational}). In contrast, our empirical evaluations utilize standard transformer layers that include normalization.

\subsection{Counting Tasks}
We analyze the expressiveness of transformers on two fundamental counting tasks. In both cases, the input is a sequence $x_1, \ldots, x_n$, and the model must output a single scalar value $y$.

\textbf{Query Count (QC).}
The model is presented with a sequence and must return the count of the last token (the query) within that sequence. Formally, the target output is:
\begin{equation}
    y_{QC} = \sum_{i=1}^n \mathbb{I}[x_i = x_n]
\end{equation}
Note that $y_{QC} \geq 1$ always, as the query token itself is part of the sequence.

\textbf{Most Frequent Element (MFE).}
The model must identify the token that appears most frequently in the sequence and output its count. Formally, let $c_z = \sum_{i=1}^n \mathbb{I}[x_i = z]$ be the count of token $z$. The target output is:
\begin{equation}
    y_{MFE} = \max_{z \in \Sigma} c_z~,
\end{equation}
where $\Sigma$ is the vocabulary. For simplicity in our theoretical analysis, we consider the output to be the count of the most frequent element rather than the token itself.

\section{Query Count}
\label{sec:QC}
In this section, we analyze the Query Count (QC) task and ask which transformer architectures can implement it successfully. We distinguish between two main regimes based on the relationship between the model dimension $d$ and the vocabulary size $m$. First, we describe a $\operatorname{Histogram}$ solution that is efficient but requires $d \geq m$. Second, we analyze the $\operatorname{CountAttend}$ mechanism, which functions for any $d  = \Omega(\log(m))$ but essentially requires the MLP width to scale with the context length $n$. Finally, we provide a general impossibility result showing that for a fixed dimension $d$, the error in counting must grow as the vocabulary size $m$ and the context size $n$ increase, regardless of the specific construction used.

\subsection{The Histogram Solution and the $d < m$ Threshold}
\label{sec:hist_solution}

When the embedding dimension allows for orthogonal representations of the vocabulary (specifically $d \geq m$), the QC task can be solved efficiently using a "Histogram" approach.

\begin{theorem}[\textbf{Histogram Solution}]\label{thm:histo_exists}
    For the Query Count problem and any context length $n$, if $d \geq m$, there exists a 1-layer, 1-head transformer with an MLP of width $d$ that solves the task perfectly.
\end{theorem}

The construction relies on mapping each token to a standard basis vector $\be_i$. A single attention head with uniform attention ($Q=0, K=0$) aggregates these vectors to form a histogram of the context: $\bh = \sum_{j=1}^n \be_{x_j}$. The query token $x_n$ is preserved via the residual connection (it can also be preserved by adding a second head). An MLP can then extract the count corresponding to the query index from $\bh$.

\textbf{Failure at $\mathbf{d < m}$.} 
This solution breaks fundamentally when the vocabulary exceeds the dimension. If $d < m$, the embeddings cannot be orthogonal. The interference between non-orthogonal vectors introduces noise into the histogram. Using Welch bounds \citep{welch1974lower}, we show that this interference is unavoidable:
\begin{theorem}\label{thm:histo_breaks}
    If $m \geq 2d$, for any choice of embedding vectors, there exist inputs where the Histogram solution incurs an error of at least $\Omega(\sqrt{n})$.
\end{theorem}
This implies that simple linear aggregation capability of transformers is insufficient for counting in the "dense" vocabulary regime.

\subsection{The CountAttend Solution and Scaling Limits}
\label{sec:count_attend}

When $d < m$, we cannot rely on orthogonal embeddings. An alternative strategy, which we term $\operatorname{CountAttend}$, utilizes the self-attention mechanism to filter tokens dynamically. This solution works for any vocabulary size and dimension, provided the MLP is sufficiently large.

\begin{proposition}\label{prop:softmax_construction}
    For any $n$ and  $d > \log(m)$, there exists a transformer that solves the QC problem with one layer, one attention head, embedding dimension $d$, and an MLP of width $O(n)$.
\end{proposition}

The mechanism works by letting the query token $x_n$ attend to the sequence with specific query/key matrices that assign high attention weights only to tokens identical to itself. This results in an aggregated value vector that scales with the count of the query token. The MLP then acts as a functional inverse, mapping this aggregate value back to the precise integer count. Crucially, because the attention mechanism applies a softmax normalization, recovering the raw count requires inverting the function $f(x) \propto 1/x$. Implementing this inverse with high precision over the domain $[1/n, 1]$ requires the MLP capacity to grow linearly with the context length $n$. For more details and intuition about the construction, see    \appref{app:count_attend_details}.

\textbf{Failure at Scale.} 
While $\operatorname{CountAttend}$ is theoretically powerful, its reliance on an $O(n)$ width MLP makes it impractical for long contexts.
\begin{lemma}\label{lem:1/x_approx}
    Any MLP with ReLU activations that approximates $f(x) = 1/x$ for $x \in [1/n, 1]$ with sufficient precision to distinguish counts requires width $\Omega(n)$.
\end{lemma}
This result implies that a fixed-size transformer cannot count accurately for arbitrarily long sequences using this mechanism. Furthermore, the construction requires attention logits to scale with $\log n$, which introduces optimization difficulties and precision issues in practice.

\subsection{General Limitation for Fixed Dimension}
\label{sec:general_limitation}

The limitations of the Histogram and CountAttend solutions are not coincidental. We now present a general result showing that transformers with fixed embedding dimension $d$ fundamentally struggle to count as the vocabulary size $m$ grows large. This result assumes the model utilizes the Lipschitz continuity typical of bounded-weight neural networks.

\begin{theorem}\label{thm:general_impossibility}
    Let $T$ be an $L$-layer transformer with embedding dimension $d$. Assume the input embeddings are bounded in norm by $R$, the value matrices are bounded such that $\|W_V^{(h,\ell)}\|_2 \leq K_1$ for any layer $\ell\in[L]$ and attention head $h\in[H]$, and also that the MLPs are $K_2$-Lipschitz. Then, there exists a sequence of length $N+1$ for which the counting error is lower bounded by:
    \begin{equation}
        |T(\bx) - y_{QC}| \geq \frac{N}{2} - \frac{(HK_1 K_2)^L \cdot R}{m^{1/d} - 1}
    \end{equation}
\end{theorem}

This theorem (proven in \appref{app:proofs_QC}) reveals a "curse of dimensionality" for counting, and shows that some components of the transformer must scale with both the vocabulary size and context length. As the vocabulary size $m$ increases for a fixed $d$, the term $m^{1/d}$ approaches 1, causing the denominator to vanish and the error bound to dominate. This implies that to maintain low error, the dimension $d$ must scale logarithmically with $m$. 
Moreover, even if $d = \Omega(\log(n))$, to achieve a small error the Lipschitz parameters of the model must scale with the context size $n$. This is similar to the limitation of the $\operatorname{CountAttend}$ solution which essentially requires scaling the Lipschitz constant of the MLP by $n$.

\section{Most Frequent Element}
\label{sec:most_frequent}

We now turn to the Most Frequent Element (MFE) task. In this setting, the model is presented with a sequence of tokens and must identify the count of the token that appears most frequently. This problem generalizes Query Counting, as the model must implicitly track counts for all tokens simultaneously to perform the maximization.

\subsection{Information Theoretic Lower Bound ($L=1$)}
We begin by establishing a lower bound for a single-layer transformer using communication complexity. This result is strong because it provides explicit capacity bounds on the architecture parameters ($d, h, p$) regardless of the weight magnitudes.

\begin{theorem}\label{thm:MFE_lower_bound_comm}
    Consider a 1-layer transformer with $h$ heads, embedding dimension $d$, and precision $p$, followed by an MLP of arbitrary width and depth. If this model solves the MFE task for sequences of length $n$ and vocabulary size $m$, then it must hold that:
    \[ dhp \geq \Omega(\min\{m, n\}) \]
\end{theorem}
The intuition behind this result relies on a reduction from the Set Disjointness problem. We construct a sequence where the prefix encodes a set $A$ (Alice) and the suffix encodes a set $B$ (Bob), using unique tokens for each element. If $A$ and $B$ share an element, that corresponding token appears twice (once in the prefix, once in the suffix), whereas if they are disjoint, every token appears at most once. Consequently, distinguishing between a maximum frequency of $1$ and $2$ is equivalent to determining disjointness. Since the attention mechanism acts as the sole communication channel transferring information from the prefix to the suffix, its total capacity $O(d \cdot h \cdot p)$ must meet the $\Omega(n)$ communication requirement of the problem. The formal proof can be found in \appref{app:proofs_MFE}.

\paragraph{Limitations and Extensions.}
The strength of \thmref{thm:MFE_lower_bound_comm} lies in its explicit capacity bounds for shallow models. However, this argument does not trivially extend to deeper transformers (i.e., $L > 1$). In a multi-layer setting, the "state" passed between layers consists of all $n$ token embeddings, effectively providing a channel capacity of $n \times d$. This allows intermediate layers to distribute and mix information across the entire sequence, bypassing the single-vector bottleneck of the first layer.
Consequently, to prove limitations for deep transformers, we cannot rely on information capacity alone. Instead, we bound the sensitivity of the network.
In the next subsection, we show that regardless of depth or capacity, if $d < m$, the input embeddings become crowded, forcing the network's weights to explode to distinguish them.

\subsection{Weight Magnitude Lower Bound ($L \geq 1$)}
When the embedding dimension $d$ is smaller than the vocabulary size $m$, the token embeddings are necessarily crowded in $\mathbb{R}^d$. We show that differentiating between these crowded vectors requires the network's Lipschitz constant (and consequently its weights) to grow with $m$.

\begin{theorem}\label{thm:MFE_lower_bound_weights}
    Let $T$ be an $L$-layer transformer with embedding dimension $d$. Assume the input embeddings are bounded in norm by $R$. Let $K$ be the Lipschitz constant of the transformer with respect to the inputs. 
    If $d < m$, then to solve the MFE task, $K$ must satisfy:
    \[ K \ge \Omega\left( \frac{\sqrt{n}}{R} \cdot m^{1/d} \right) \]

\end{theorem}

The formal proof can be found in \appref{app:proofs_MFE}. Note that, unlike \thmref{thm:general_impossibility}, here we rely on the geometric properties of the self-attention mechanism itself. While the softmax function can approximate a discontinuous hardmax (implying an infinite Lipschitz constant), doing so for inputs that are $\epsilon$-close requires the pre-softmax logits to differ by a constant amount. Thus, resolving the crowding of the embedding space forces the weight magnitudes to grow with $m$ and $n$.

\subsection{Upper Bound: $d=m$ is Sufficient}
The lower bounds suggest that MFE is difficult when $d \ll m$. We show that the condition $d=m$ is sufficient to solve the task with well-behaved weights.

\begin{theorem}\label{thm:MFE_upper_bound}
    There exists a 1-layer transformer with $d=m$, $H=1$, and logarithmic precision $p = \Omega(\log (n))$ that solves the MFE task. This construction requires an MLP of width $O(m^2)$.
\end{theorem}

The construction mirrors the Histogram solution from the QC task. Since $d=m$, we can orthogonally embed the tokens and aggregate them to form a precise histogram. Because the vectors are orthogonal, standard attention mechanisms with small weights suffice to process them. The full proof can be found in \appref{app:proofs_MFE}.

One limitation of the 1-layer construction is that extracting the maximum from the $m$-dimensional histogram requires an MLP of width $O(m^2)$. This wide MLP can be avoided by trading width for depth. A natural alternative is to use a deeper MLP: a depth $O(\log (m))$ MLP with width $O(m)$ can calculate the maximum efficiently via a binary tree of pairwise comparisons \citep{safran2024many}. Another option is to use a 2-layer transformer architecture. In this setup, the first layer calculates the count for each element, and the second layer utilizes its softmax attention as a hardmax operator to directly attend to and extract the maximal count. We present the above construction for simplicity.

\section{Experiments}
\label{sec:experiments}

Our theoretical analysis highlights a fundamental bottleneck in the transformer architecture's ability to count, driven by the relationship between the embedding dimension $d$, the vocabulary size $m$, and the sequence length $n$. In this section, we empirically validate these limitations on the Query Counting (QC) task. The model is given a sequence of length $n$ containing integer tokens sampled uniformly from a vocabulary of size $m$, and must output the exact frequency of every possible token. All the experimental details can be found in \appref{app:experimental_details}. 

\paragraph{Evaluation Metric: NMAE.} 
Standard accuracy is too rigid for this task, as it treats all incorrect counts equally and obscures near-miss learning signals. Conversely, standard Mean Absolute Error (MAE) is heavily skewed by the vocabulary size. If the context lenght $n$ is fixed and $m$ grows, the expected frequency of any given token is $\frac{n}{m}$, which shrinks towards zero. This artificially deflates the absolute error. To establish a fair comparison across arbitrary scales, we evaluate using Normalized Mean Absolute Error (NMAE). By dividing the absolute error by the expected frequency, NMAE normalizes the difficulty. An NMAE of 0.1 indicates the model's prediction deviates by 10\% of the expected count, allowing us to accurately compare model performance across widely varying $m$ and $n$ dimensions. Each experiment is repeated $5$ times with random seeds, and both the mean and variance are reported.

\subsection{In Distribution Scaling}

\begin{figure*}[t]
    \centering
    \begin{subfigure}[b]{0.32\textwidth}
        \centering
        \includegraphics[width=\textwidth]{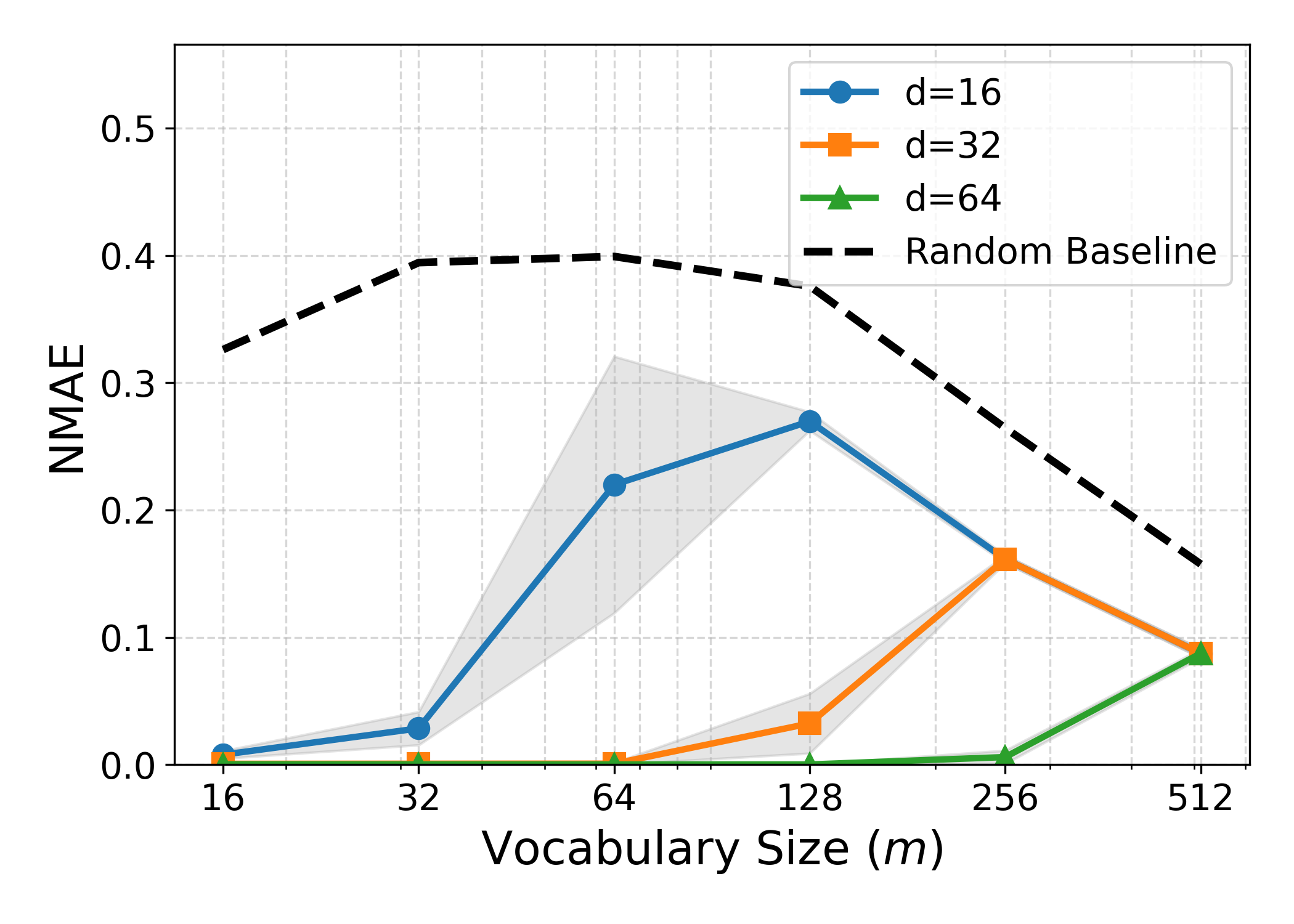}
        \caption{Test Sequence Length $n=50$}
        \label{fig:baseline_n50}
    \end{subfigure}
    \hfill
    \begin{subfigure}[b]{0.32\textwidth}
        \centering
        \includegraphics[width=\textwidth]{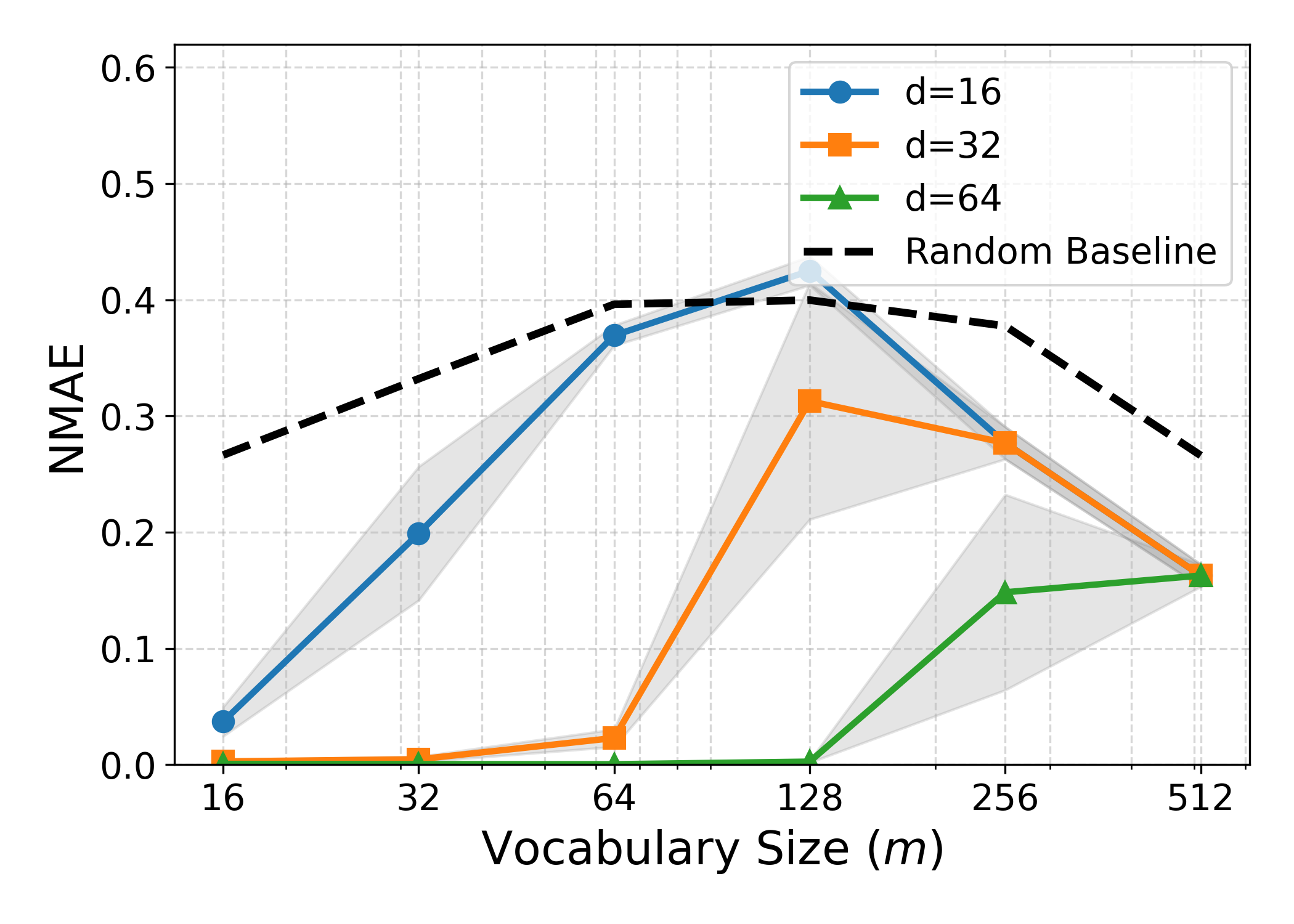}
        \caption{Test Sequence Length $n=100$}
        \label{fig:baseline_n100}
    \end{subfigure}
    \hfill
    \begin{subfigure}[b]{0.32\textwidth}
        \centering
        \includegraphics[width=\textwidth]{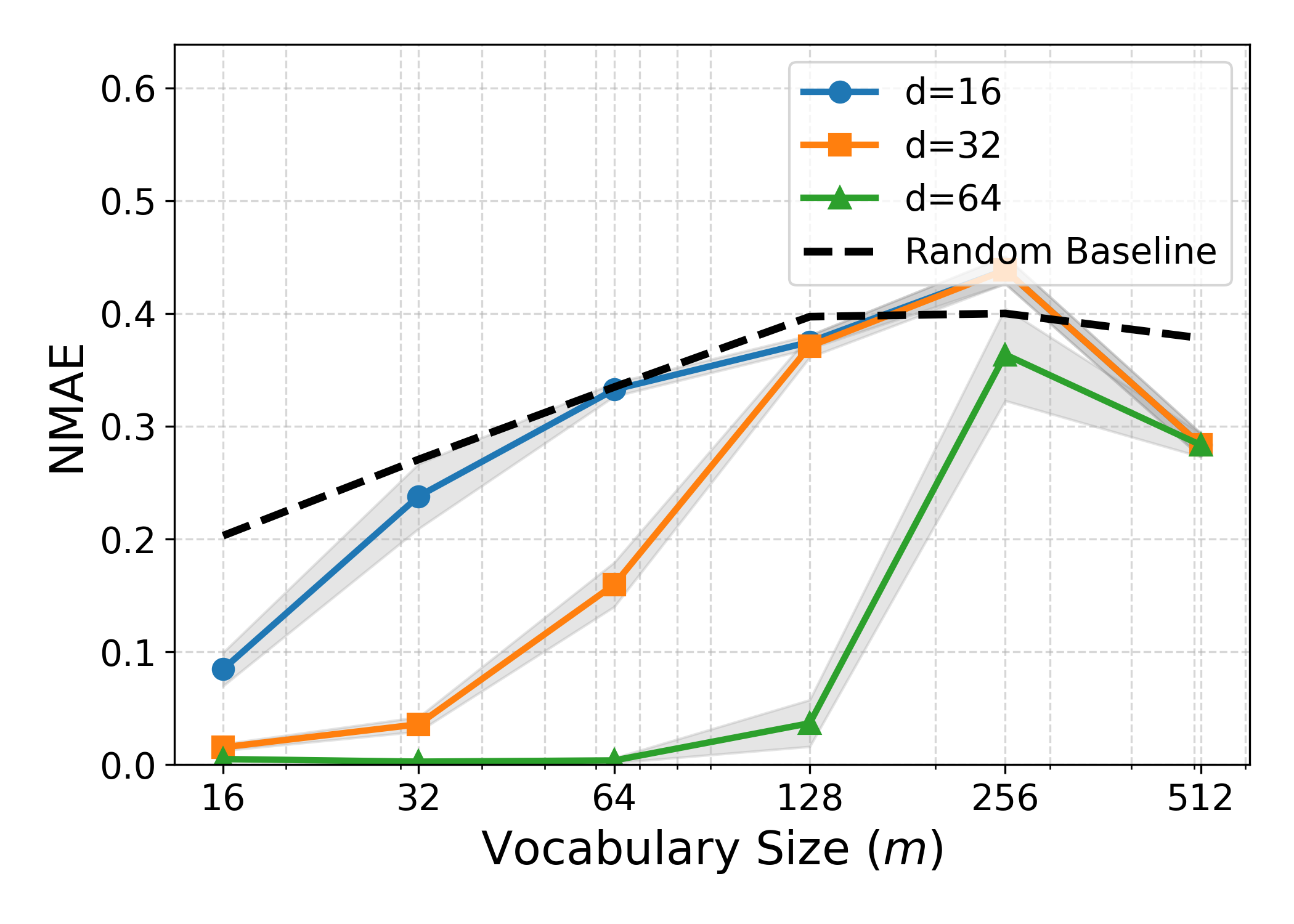}
        \caption{Test Sequence Length $n=200$}
        \label{fig:baseline_n200}
    \end{subfigure}
    \caption{In distribution baseline evaluating NMAE against vocabulary size $m$ across different embedding dimensions $d$. The results are split into three separate subfigures by sequence length. The dashed black line represents a random guessing baseline. Note how larger dimensions delay, but do not prevent, the mathematical degradation toward random performance.}
    \label{fig:id_baselines}
\end{figure*}

\paragraph{Baseline Capability.}
We first establish the standard capability of vanilla encoder-decoder transformers on the counting task. We trained models across different embedding dimensions $d \in \{16, 32, 64\}$, sequence lengths $n \in \{50, 100, 200\}$, and vocabulary sizes $m \in \{16, \dots, 512\}$. As shown in \figref{fig:id_baselines}, the models exhibit a predictable capability threshold. For a naive comparison, we include a dashed line representing a random guessing baseline that guesses the mean count as $\frac{n}{m}$. Notably, for a fixed sequence length $n$, this random baseline inherently improves as the vocabulary size $m$ increases simply because random guessing becomes more accurate when the expected true count shrinks toward zero. While models easily outperform this baseline for small vocabularies, their performance degrades steadily as the vocabulary size expands. Once the vocabulary sufficiently exceeds the embedding dimension, the model error intersects the random baseline, indicating a complete loss of exact counting capability. However, models with larger embedding dimensions like $d=64$ degrade much slower and handle larger vocabularies more effectively, confirming our theoretical intuition regarding the geometric crowding bottleneck.

\begin{figure}[t]
    \centering
    \includegraphics[width=0.85\linewidth]{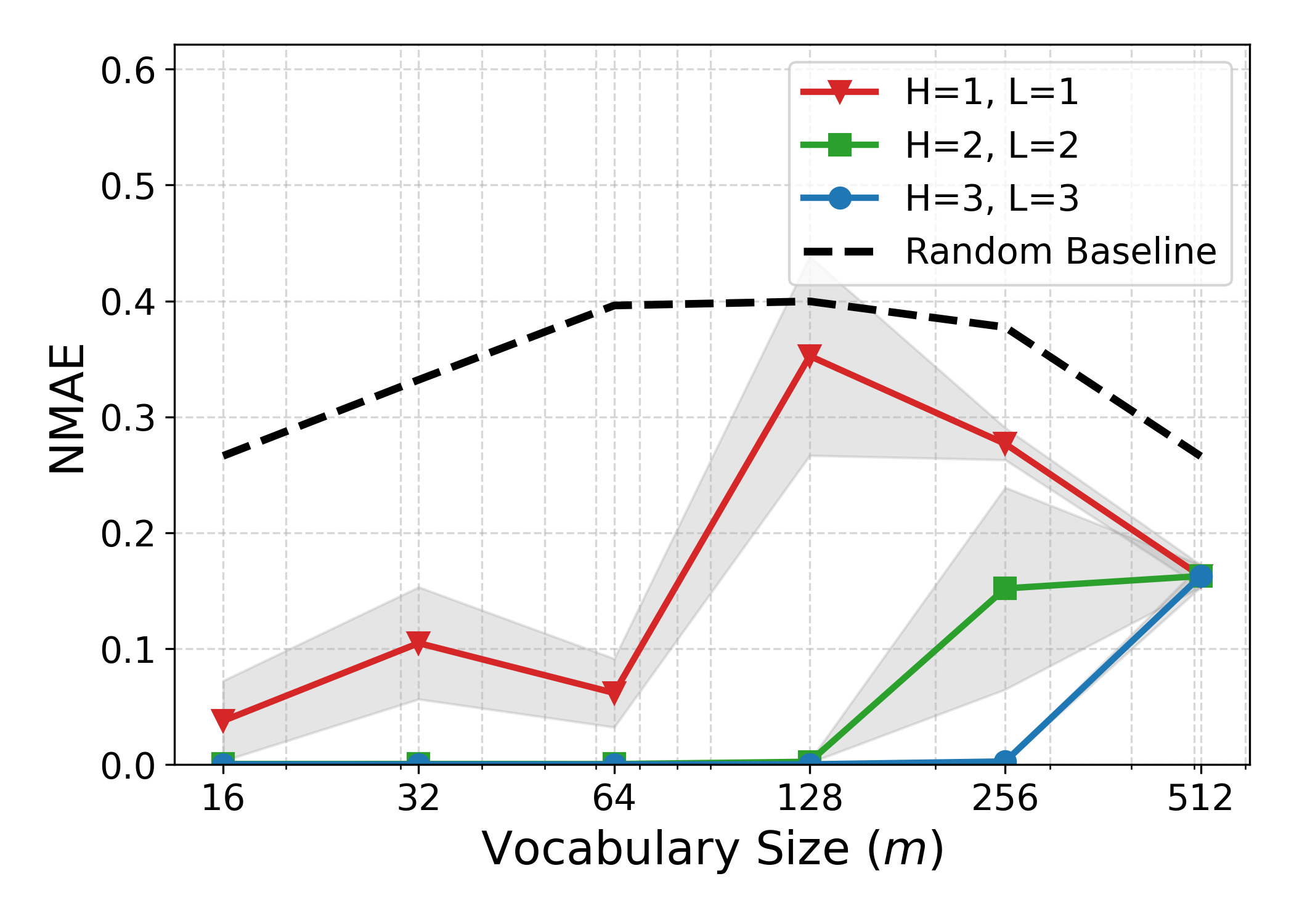}
    \caption{Evaluating task performance when scaling the model architecture depth $L$ and attention heads $H$ against vocabulary size $m$. Sequence length $n=100$ and dimension $d=64$ are fixed. The dashed line represents the random guessing baseline.}
    \label{fig:model_scaling}
\end{figure}

\paragraph{Model Size Scaling.}
To determine if this degradation is simply a matter of insufficient compute or a more fundamental scaling law, we fixed $n=100$ and $d=64$, and scaled the architecture depth $L$ and attention heads $H$. As demonstrated in \figref{fig:model_scaling}, adding more layers and heads strictly improves NMAE performance and delays failure. However, all configurations ultimately experience the exact same mathematical trajectory of degradation toward the random baseline as $m$ increases. Scaling up compute does not fundamentally solve the architecture structural difficulty with expanding vocabularies.

\subsection{Out-of-Distribution Generalization}

\begin{figure*}[t]
    \centering
    \begin{subfigure}[b]{0.48\textwidth}
        \centering
        \includegraphics[width=\textwidth]{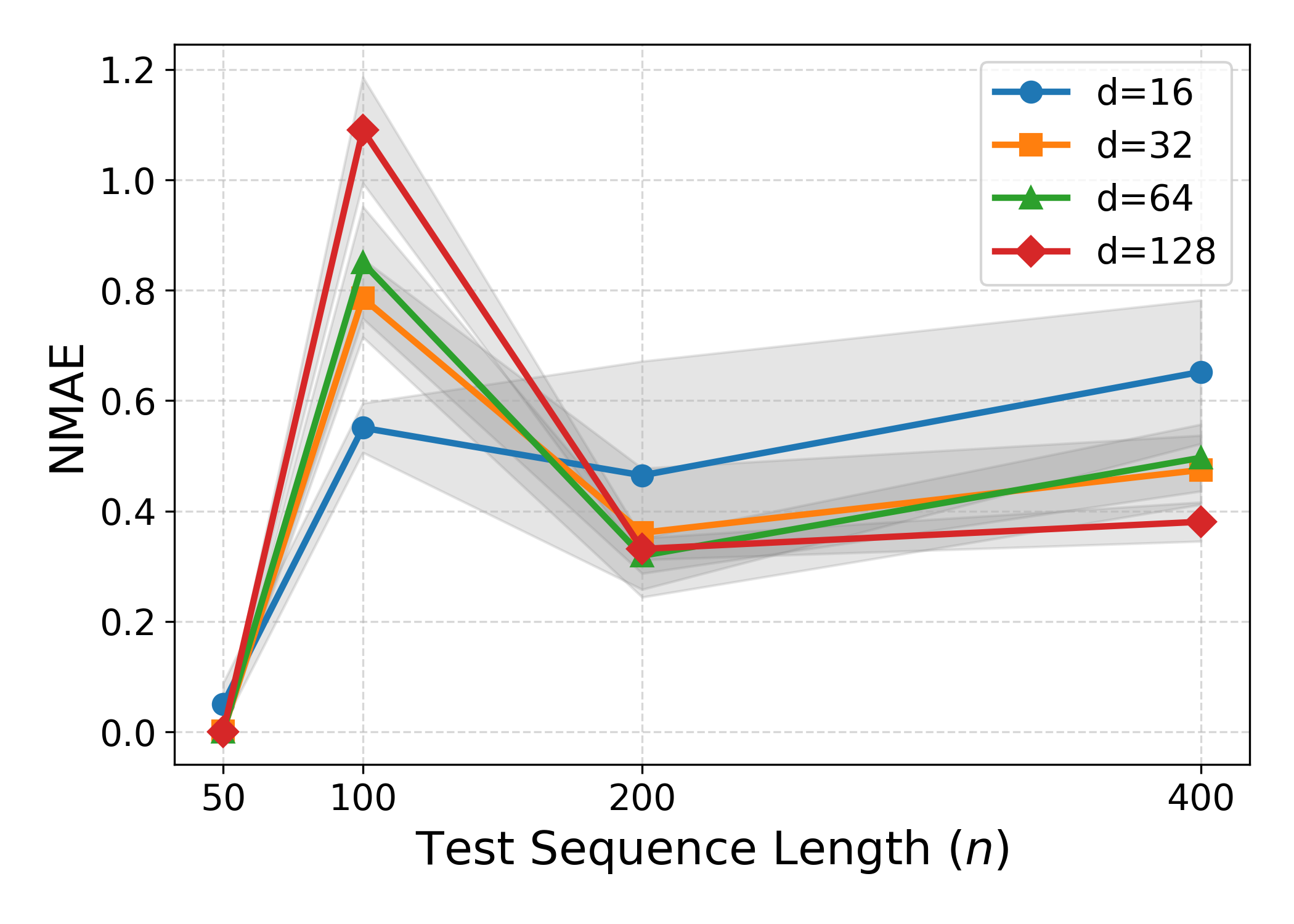}
        \caption{Tested on In-Distribution Vocab ($m=32$)}
        \label{fig:ood_m32}
    \end{subfigure}
    \hfill
    \begin{subfigure}[b]{0.48\textwidth}
        \centering
        \includegraphics[width=\textwidth]{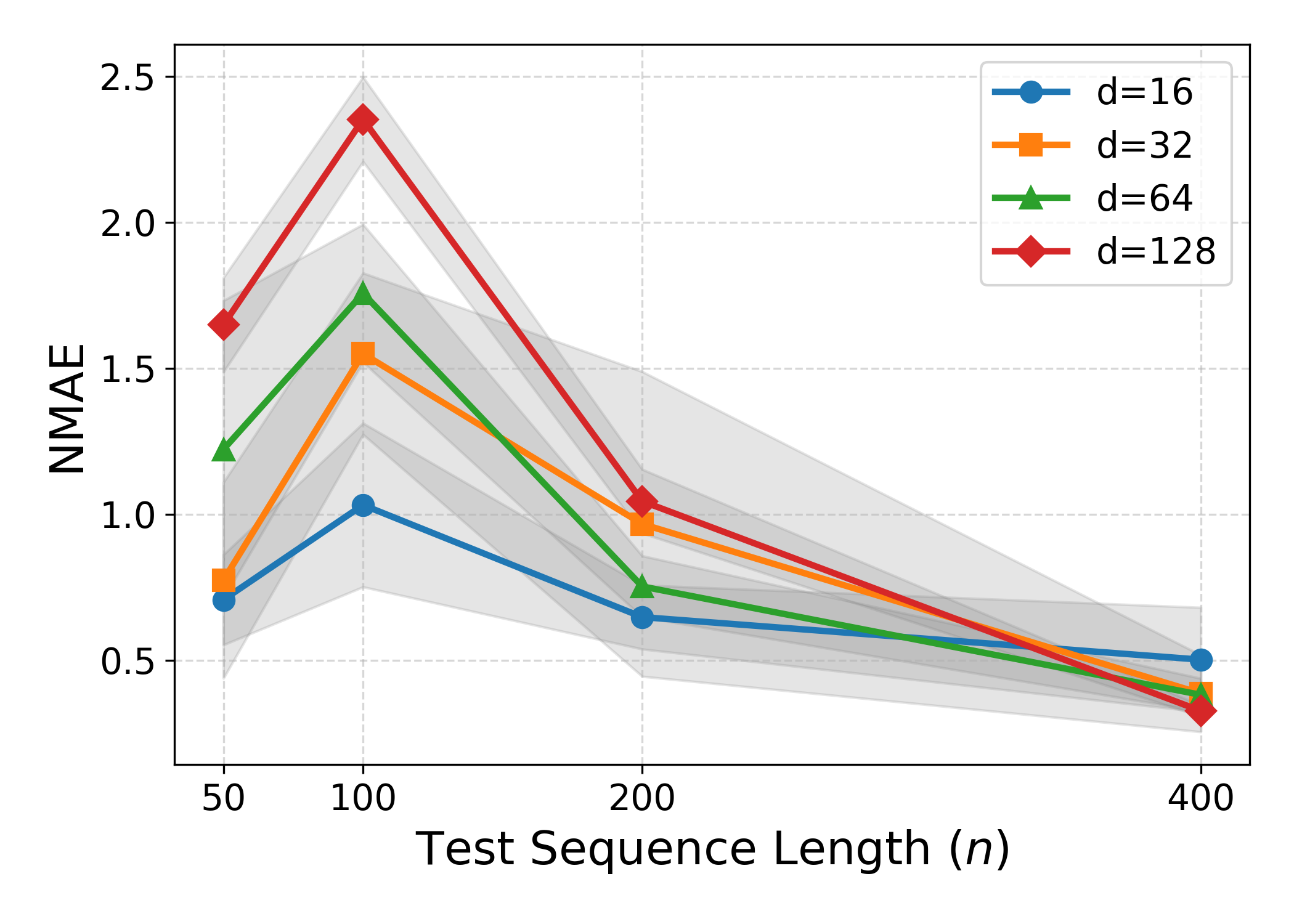}
        \caption{Tested on OOD Vocab ($m=64$)}
        \label{fig:ood_m64}
    \end{subfigure}
    \caption{Out-Of-Distribution length generalization. The models are trained strictly on $m=32$, $n=50$, and then tested on far longer sequence lengths $n$ to observe catastrophic failure. The plots show tested vocabularies of $m=32$ (in-distribution) and $m=64$ (OOD).}
    \label{fig:ood_length}
\end{figure*}

We next test the network's algorithmic robustness through out-of-distribution (OOD) constraints. If a transformer successfully memorizes how to count up to $n=50$ over a vocabulary of $m=32$, does it generalize the actual algorithm of counting to sequences it has never seen?

We isolated the training distribution to $m_{train}=32$ and $n_{train}=50$. During inference, we tested the models with sequence lengths scaling up to $n=400$ and unseen vocabularies with $m=64$. \figref{fig:ood_length} illustrates a length extrapolation failure. The moment the test sequence length $n$ exceeds the training constraints, the standard transformer's error rate explodes. It fails to generalize the algorithmic concept of counting beyond its narrow training distribution limit.

While length extrapolation breaks down entirely, we also investigated vocabulary extrapolation. Since standard discrete tokenization inherently fails on unseen tokens, we tested alternative embedding strategies (such as Continuous Embeddings and Random Mapping) to bypass this issue. These results generalized better to OOD tokens and are detailed in \appref{app:additional_experiments}.

\subsection{Evaluation of Pretrained Language Models}
\label{sec:llm_experiments}

\begin{figure*}[t]
    \centering
    \begin{subfigure}[b]{0.23\textwidth}
        \centering
        \includegraphics[width=\textwidth]{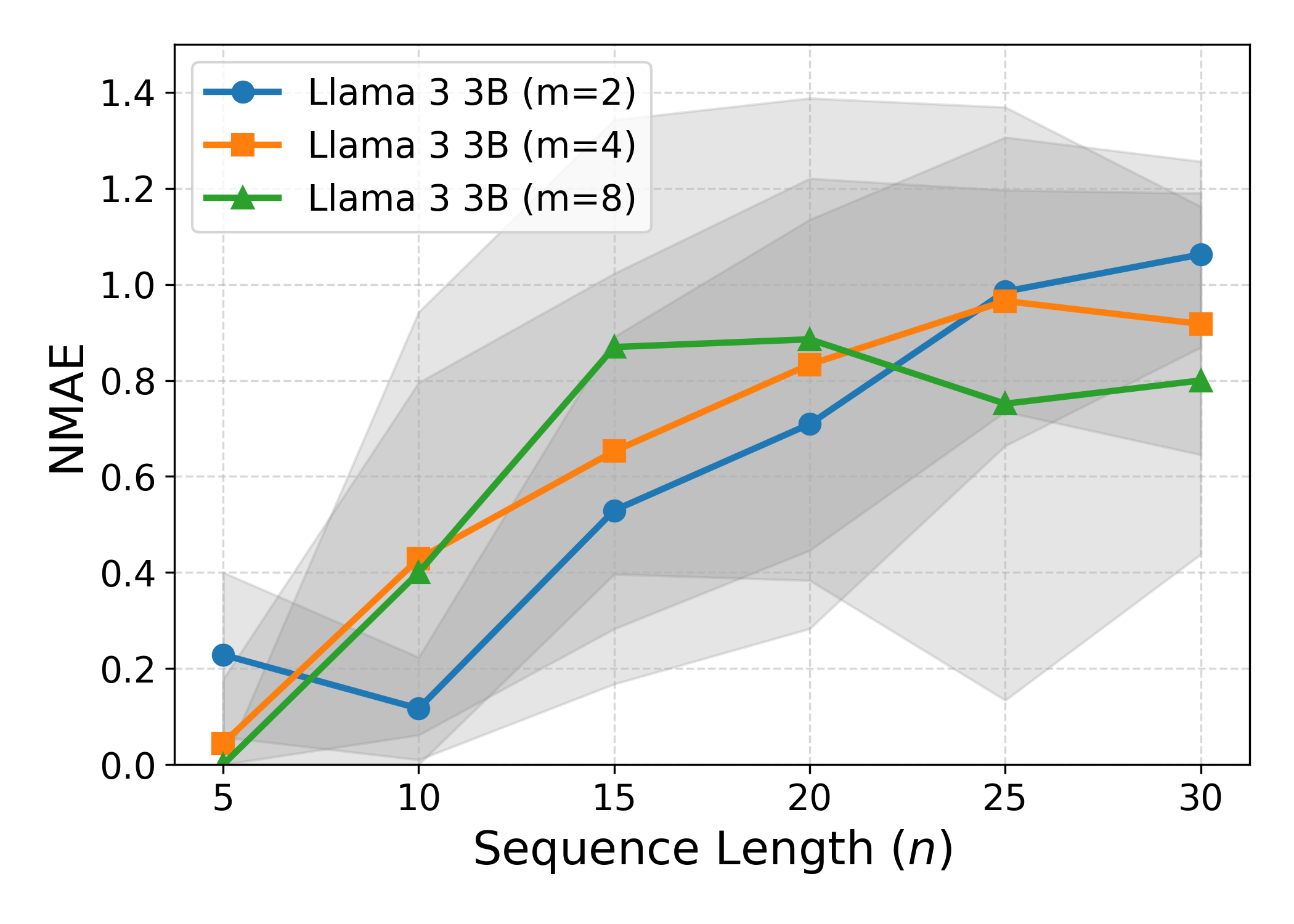}
        \caption{Llama 3.2 3B}
        \label{fig:llm_llama}
    \end{subfigure}
    \hfill
    \begin{subfigure}[b]{0.23\textwidth}
        \centering
        \includegraphics[width=\textwidth]{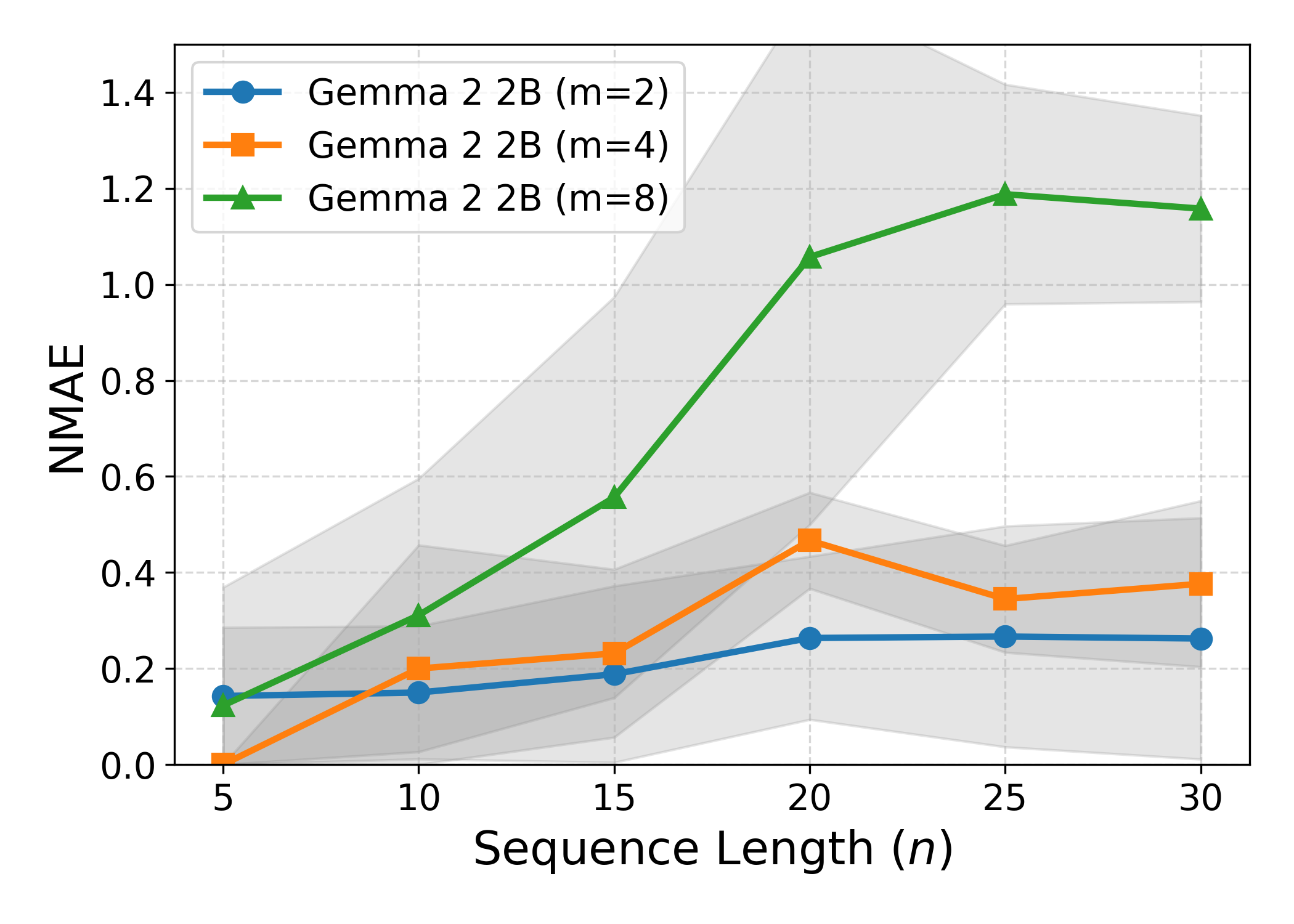}
        \caption{Gemma 2 2B}
        \label{fig:llm_gemma}
    \end{subfigure}
    \hfill
    \begin{subfigure}[b]{0.23\textwidth}
        \centering
        \includegraphics[width=\textwidth]{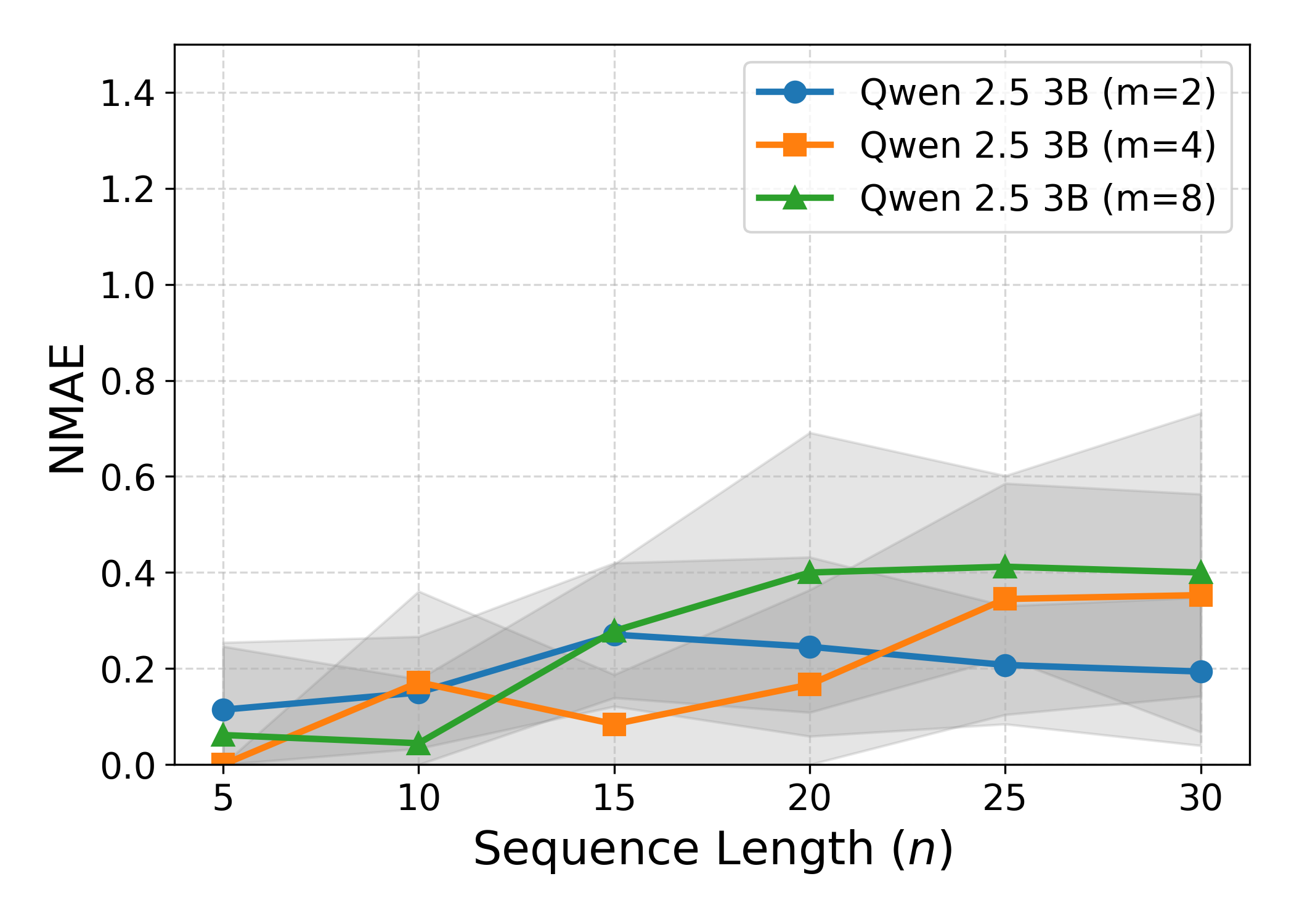}
        \caption{Qwen 2.5 3B}
        \label{fig:llm_qwen}
    \end{subfigure}
    \hfill
    \begin{subfigure}[b]{0.23\textwidth}
        \centering
        \includegraphics[width=\textwidth]{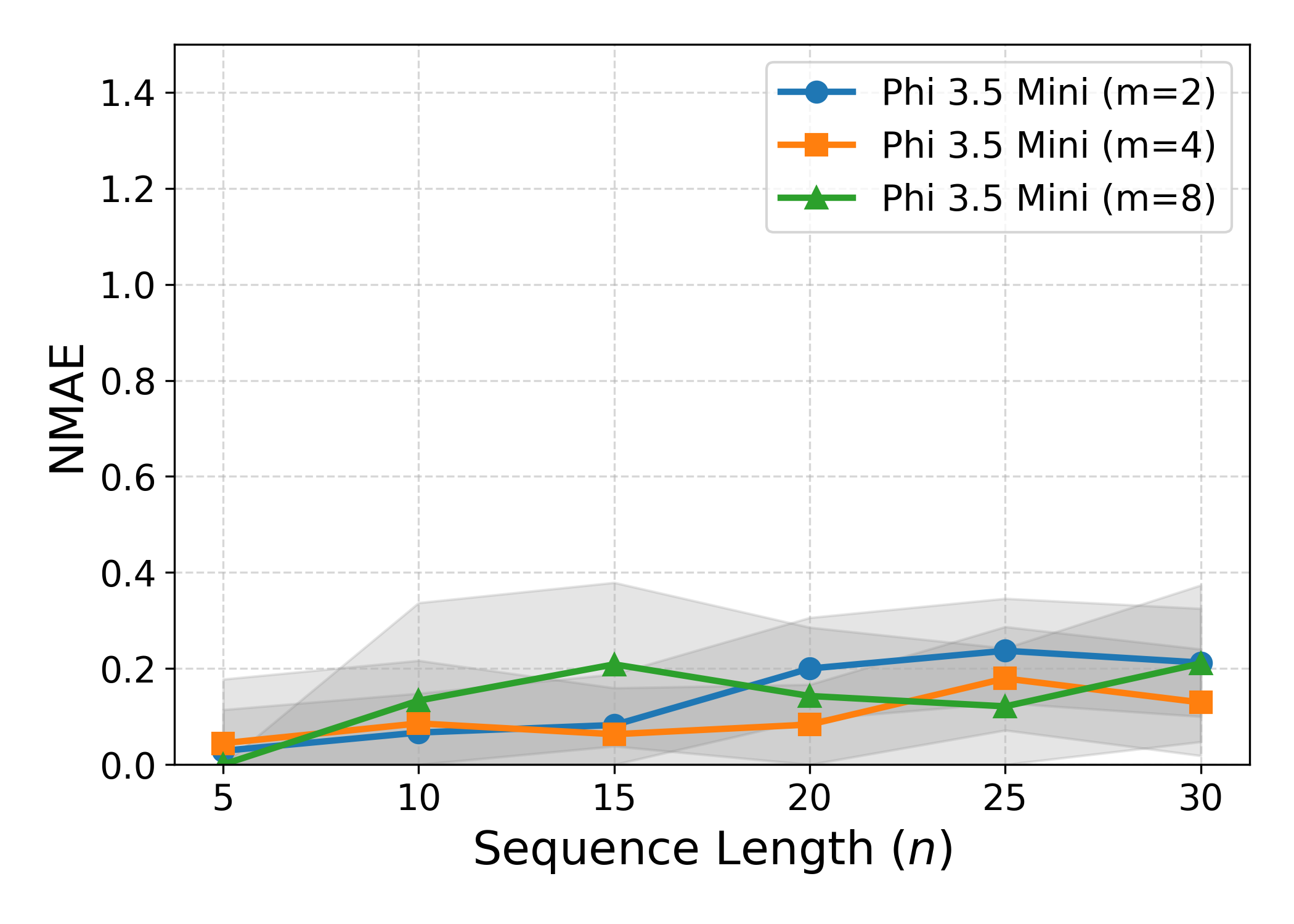}
        \caption{Phi 3.5 Mini}
        \label{fig:llm_phi}
    \end{subfigure}
    \vspace{-0.5em}
    \caption{Evaluation of four language models on the exact counting task. Models trained primarily on general linguistic data (\subref{fig:llm_llama}, \subref{fig:llm_gemma}) fail catastrophically as sequence length $n$ increases. Models trained heavily on logic and code (\subref{fig:llm_qwen}, \subref{fig:llm_phi}) perform significantly better but still exhibit a degradation trend as $n$ increases.}
    \label{fig:llm_evaluations}
\end{figure*}

To determine whether standard transformer architectures naturally learn length generalizable counting algorithms during pretraining, we evaluated four state of the art models in the 2 billion to 4 billion parameter range: Llama 3.2 3B, Gemma 2 2B, Qwen 2.5 3B, and Phi 3.5 Mini. We provided the models with a direct prompt requiring an immediate numeric answer for the query counting task. This forced the models to rely solely on parallel attention mechanisms rather than sequential chain of thought generation. We varied the vocabulary size $m$ and the sequence length $n$ to observe capability degradation. Full prompt configurations and sampling details are deferred to \appref{app:experimental_details}.

Our evaluation reveals three key findings regarding algorithmic generalization. First, general language pretraining fails to generalize counting. As shown in \figref{fig:llm_evaluations}, the counting performance of Gemma 2 and Llama 3 degrades severely as the sequence length increases. This indicates that models trained predominantly on general internet text do not learn rigid algorithms for counting. Instead, they rely on statistical heuristics that fail systematically when exact variable tracking is required over longer contexts.

Second, algorithmic and synthetic pretraining significantly delays this failure. Qwen 2.5, which is heavily pretrained on code and mathematics, and Phi 3.5, which is trained almost exclusively on highly curated synthetic logic data, performed vastly better. These models demonstrate a massive inductive bias toward exact state tracking, maintaining significantly lower absolute errors across the board. 

Finally, despite the massive improvement from curated logical pretraining, the performance curves of Phi and Qwen expose the ultimate architectural limit of the transformer. Their error curves still exhibit an unavoidable upward trend as the sequence length $n$ increases, and the degradation becomes more pronounced as the vocabulary size $m$ increases. This provides empirical validation for our theoretical claims. No matter how perfectly the pretraining data is curated for logic, the standard parallel self-attention mechanism fundamentally struggles to extrapolate recursive algorithms to unseen sequence lengths without external recurrence or a sequential workspace. This extrapolation failure is an inherent property of the architecture rather than a symptom of poor training data.

To determine if a sequential workspace mitigates these limitations, we also evaluated Gemma 2 using Chain-of-Thought prompting. While allowing the model to explicitly reason step by step alters its computational strategy, it does not prevent length extrapolation failure. As detailed in \appref{app:cot_failures}, the model scaling curves still exhibit severe degradation. Qualitatively, the model routinely hallucinates external code execution or loses track of physical array coordinates during manual counting, inserting phantom tokens into its scratchpad. This confirms that standard language model pretraining does not equip attention mechanisms to act as rigorous state tracking machines without external computational grounding.

\section{Conclusions and Future Work}

In this paper, we investigated the fundamental limitations of transformers on simple counting tasks. We identified a critical geometric bottleneck dictated by the interplay between the embedding dimension $d$ and the vocabulary size $m$. Through theoretical analysis, we proved that when the vocabulary exceeds the embedding dimension, the resulting token crowding forces the network weights to scale polynomially, rendering exact counting virtually unlearnable for large vocabularies. We empirically validated these bounds by demonstrating a sharp phase transition in performance around the $d=m$ threshold. 

A limitation of our theoretical framework is its reliance on bounded Lipschitz constants to evaluate deep transformers; extending communication complexity lower bounds to deeper models without strict Lipschitz assumptions remains a valuable open challenge. Furthermore, while our out-of-distribution experiments and evaluations of models up to 4B parameters confirmed that standard transformers fail to extrapolate counting algorithms beyond their training constraints, exploring how this fundamental bottleneck behaves in massive-scale frontier models (100B+ parameters) is an important direction for future empirical work.

Our findings open several promising directions. The success of continuous and random embeddings for out-of-distribution vocabularies suggests that rethinking standard discrete tokenization is crucial for improving algorithmic generalization. Additionally, using mechanistic interpretability to analyze the exact suboptimal heuristics transformers converge to during the $d < m$ regime could yield insights into their failure modes. Ultimately, our results indicate that relying solely on standard parallel attention is insufficient for reliable counting, suggesting that architectures augmented with external tools or sequential memory systems present a necessary path forward.

\bibliography{ref}

\newpage
\appendix
\newpage~\newpage

\section{The Need for Positional Embeddings}
\label{sec:transformers_cannot_count}
Transformer layers use self-attention to average over previous representations. The fact that they average, rather than sum, leads to an interesting limitation on their ability to count. Specifically, it is easy to see that for variable context size, they cannot perform any counting task without the use of positional embedding. Consider the QC task and an input sequence $S_1 = x_1,\ldots,x_n$, where the goal is to return the count of $x_n$ in the sequence. Now consider the length $2n$ sequence $S_2 = x_1,\ldots,x_n,x_1,\ldots,x_n$. The correct output for this sequence is twice the correct output for $S_1$. However, a transformer layer without positional embeddings that is applied to $S_1$ will have exactly the same output as the one for $S_2$. This follows because averaging is invariant to input duplication. 

The above restriction no longer holds when positional embeddings are used, and it is easy to see that it can be rectified with even a simple positional embedding that just signifies the last position (see our construction in \secref{sec:count_attend}). This implies that if a transformer has access to the length of the sequence, it may make it easier to count. Another thing to note is that while the above difficulty arises for counting, it does not arise if we are interested in calculating proportions (e.g., what is the fraction of the items of the sequence that are equal to $x_n$).

\section{Details of the CountAttend Mechanism}
\label{app:count_attend_details}

In \secref{sec:count_attend}, we introduced the $\operatorname{CountAttend}$ mechanism as a solution for the counting task when the embedding dimension $d$ is smaller than the vocabulary size $m$. Here we provide the detailed construction and intuition.

\subsection{Intuition and Construction}
The core idea of $\operatorname{CountAttend}$ is to use the self-attention mechanism to filter the context, assigning high weight only to tokens that are identical to the query token $x_n$.

Consider a query token $x_n$. We construct the query ($Q$) and key ($K$) matrices such that the dot product $(K \vv_{x_j})^\top (Q \vv_{x_n})$ is large if $x_j = x_n$ and small otherwise. Specifically:
\begin{enumerate}
    \item \textbf{Matching:} We set the attention scores such that the logit for matching tokens is $T$ and for non-matching tokens is $0$.
    \item \textbf{Softmax:} The attention weight for a token $x_j$ becomes:
    \begin{equation}
        \alpha_j \approx \frac{e^T}{\sum_{k=1}^n e^{\text{score}_k}}
    \end{equation}
    If there are $c_{x_n}$ occurrences of $x_n$, the denominator is roughly $c_{x_n} e^T + (n - c_{x_n}) e^0$. For sufficiently large $T$, this approximates $c_{x_n} e^T$.
    \item \textbf{Aggregation:} Consequently, the attention weight for each matching token is approximately $\alpha_j \approx \frac{e^T}{c_{x_n} e^T} = \frac{1}{c_{x_n}}$.
    \item \textbf{Extraction:} 
    The query token $x_n$ attends to all tokens. The attention probability mass is distributed uniformly among the $c_{x_n}$ copies of $x_n$. We use a positional embedding $\pp_n$ (which acts as an indicator for the query position) to "tag" the value vector of the query token. The attention output will then contain a component proportional to $\alpha_n \pp_n = \frac{1}{c_{x_n}} \pp_n$.
    
    As illustrated in \figref{fig:2d data}, the MLP then extracts this scalar value $1/c_{x_n}$ and applies the function $f(z) = 1/z$ to recover the count $c_{x_n}$.
\end{enumerate}

\begin{figure*}[t]
\centering
\begin{subfigure}[b]{0.44\textwidth}
     \centering
     \includegraphics[width=\textwidth]{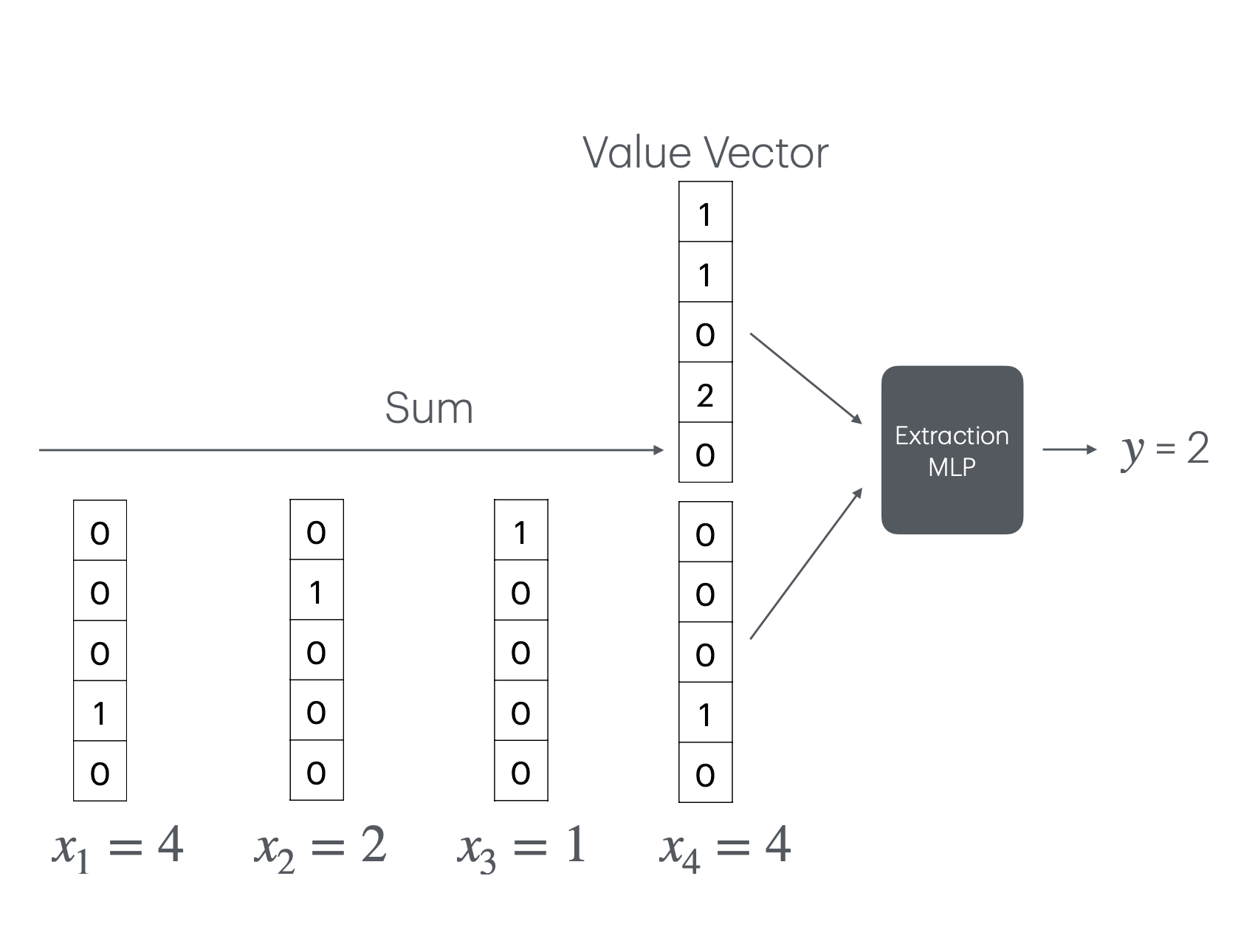}
     \caption{}
\end{subfigure}
\hfill
\begin{subfigure}[b]{0.44\textwidth}
     \centering
     \includegraphics[width=\textwidth]{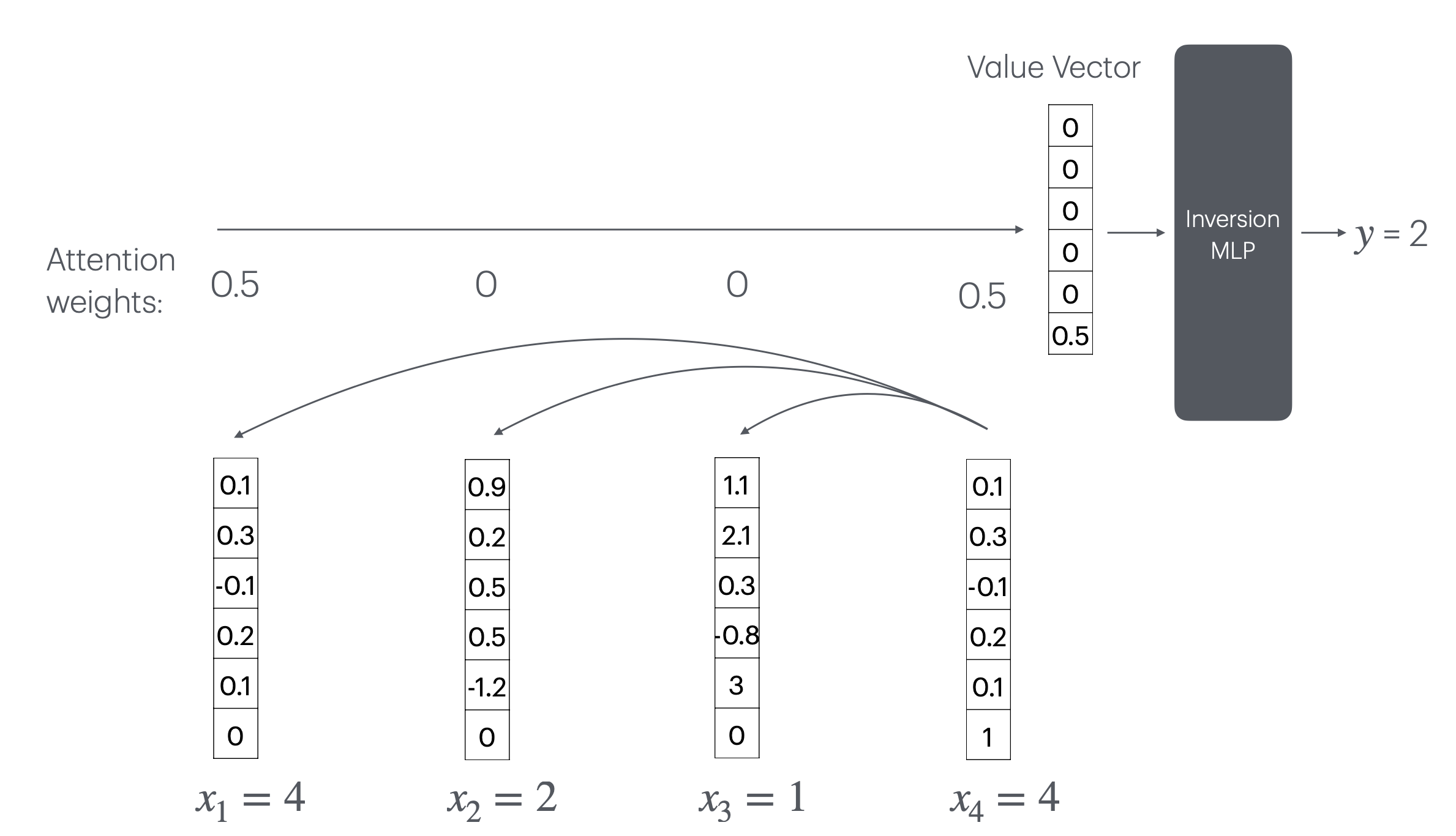}
     \caption{}
     \label{fig:attention_method}
\end{subfigure}
\caption{(a) Solving QC using a histogram for $d>m$. To count the number of tokens with $x_i=4$, we assume each token is embedded to the standard basis. This results in a histogram of the inputs, and the $4^{th}$ element can be extracted using a simple ``Extraction MLP''. (b) Solving QC using $\operatorname{CountAttend}$: this solution works for all $d$, but requires an MLP for inverting numbers, and we show that this MLP needs to be of size $n$, which can be prohibitive. To count the number of tokens with $x_i=4$, the last token attends to the others such that only tokens with $x_i=4$ receive large weights. This results in weights that are non-zero only for $x_i=4$, and the resulting weight on these is the inverse of the count of $4$, i.e., 0.5 in this case. Then this inverse is moved to the last element of the value vector, using a positional embedding coordinate that is $1$ only for the last token $n$. Finally, the inverse count needs to be inverted to get the desired count, and this requires the ``Inversion MLP''.} \label{fig:2d data}
\end{figure*}

\section{Proofs from \secref{sec:QC}}
\label{app:proofs_QC}

\subsection{Proof of \thmref{thm:general_impossibility}}
\begin{proof}
    Given a dictionary $\mathcal{D} = \{x_1,\dots,x_m\}$, we consider only sequences of the form $(x_i,\dots,x_i, x_j)$ of length $N+1$, namely, sequences that contain only $N$ copies of the same token, and the goal is to count the $x_j$-th token. The output for the QC task on such inputs is either $N$ if $i=j$ and $0$ otherwise. Note that for these sequences, the attention of all the tokens is uniform since we use causal masking, and each token can see only the tokens that were previously in the sequences, which in any case are all the same. Namely, the output of each attention layer is $\frac{1}{N}\sum_{k=1}^N v^{(\ell)}_{x_i} = v^{(\ell)}_{x_i}$ for a sequence with the token $x_i$ as the first. Here $v^{(\ell)}_{x_i}$ are the value vectors, i.e., the output of the attention mechanism after applying the value matrix.
    Let $\ell$ and $\ell'$ be the two vectors that minimize $\norm{\bv_{\ell} - \bv_{\ell'}} $ over all the indices $\ell,\ell'\in[m]$. We define the two sequences  $z_\ell = (x_\ell,\dots,x_\ell,x_\ell)$ and $z_{\ell'} = (x_{\ell'},\dots,x_{\ell'},x_\ell)$. We will show that the error of the transformer on at least one of these sequences must be large. 
    Denote the transformer as $T$, and let $K_{net} = (H K_1 K_2)^L$ denote the global Lipschitz constant of the network. We have that:
    \[
    |T(z_\ell) - T(z_{\ell'})| \leq K_{net} \cdot \norm{\bv_\ell  - \bv_{\ell'}}~.
    \]
    Also, if a transformer $T$ succeeds on both sequences, then we have $ |T(z_\ell) - T(z_{\ell'})| = N$. In particular, for at least one $i\in\{\ell, \ell'\}$ we have that
    \[
    |T(z_i) - QC(z_i)| \geq \frac{N}{2} - \frac{K_{net}}{2}\norm{\bv_\ell - \bv_{\ell'}}.
    \]
    Combining the above with \lemref{lem:min dist bound} finishes the proof.
\end{proof}

\begin{lemma}\label{lem:min dist bound}
    Let $\bv_1,\dots,\bv_m\in\reals^d$ with $\norm{\bv_i}\leq R$ for some $R > 0$. Then, we have that $\min_{i\neq j} \norm{\bv_i-\bv_j} \leq \frac{2R}{m^{1/d} - 1}$.
\end{lemma}
\begin{proof}
    Consider $m$ disjoint balls centered at each $\bv_i$ with a radius of $\frac{\delta}{2}$. by the condition that $\norm{\bv_i}\leq R$, all of these balls must fit into a ball of radius $R + \frac{\delta}{2}$. Recall the volume of a ball of radius $r$ in dimension $d$ is equal to $\text{Vol}(B_r) = C_d r^d$ where $C_d = \frac{\pi^{d/2}}{\Gamma\left(\frac{d}{2} + 1\right)}$. Combining this volume with the above condition gives:
    \[
    m\cdot \text{Vol}(B_{\delta/2}) \leq \text{Vol}(B_{R+\delta/2})~.
    \]
    Substituting these terms with the expression for the volume of the ball gives:
    \[
    mC_d\left(\frac{\delta}{2}\right)^d\leq C_d \left(R + \frac{\delta}{2}\right)~
    \]
    Solving the above for the separation parameter $\delta$ gives:
    \[
    \delta \leq \frac{2R}{m^{1/d} - 1}~.
    \]

    Let $\bu_1,\dots,\bu_m$ with $\norm{\bu_i}\leq R$ some set of vectors that maximize the minimum distance $\min_{i,j} \norm{\bu_i-\bu_j}$ over all possible set of vectors. It is clear that $\min_{i,j} \norm{\bv_i-\bv_j} \leq \min_{i,j} \norm{\bu_i-\bu_j}$. We first show that $\norm{\bu_i} = R$ for all $i$. Assume otherwise, and there is $i$ with $\norm{\bu_i} = r < R$, where $\norm{\bu_j} \geq r$ for every $j\in[m]$. Define $\alpha = \frac{R}{r}$ and construct a new set of vectors $\{\alpha\bu_1,\dots,\alpha\bu_m\}$. These vectors all have norm $\leq R$ since $\alpha\cdot  r \leq R$. Also, for any $i\neq j$ we have:
    \[
    \norm{\alpha\bu_i - \alpha\bu_j } = \alpha\norm{\bu_i - \bu_j} > \norm{\bu_i - \bu_j}~,
    \]
    since $\alpha > 1$. This means that the new set provides a better separation for the pairwise distances than the original set, which contradicts the assumption about maximizing the minimum pairwise distance. Hence, $\norm{\bu_i} = R$ for every $i$. Using the Welch bound, we have that:
    \[
    \max_{i\neq j} |\inner{\bu_i,\bu_j}| \geq R^2\sqrt{\frac{m-d}{d(m-1)}}.
    \]
    Note that the Welch bound applies to norm $1$ vectors; we can still use it here by scaling all the vectors to have norm $1$, and then rescaling to norm $R$ since all the vectors have the same norm. Finally, by bounding the distances we have that:
    \begin{align}
        \min_{i,j}\norm{\bu_i - \bu_j}^2 &\leq \norm{\bu_i}^2 + \norm{\bu_j}^2 - 2\max_{i,j}|\inner{\bu_i,\bu_j} \\
        &\leq 2R^2 - 2R^2\sqrt{\frac{m-d}{d(m-1)}}~.
    \end{align}
    Taking root on both sides finishes the proof.
\end{proof}

\subsection{Proof of \thmref{thm:histo_breaks}}
\begin{proof}
    Let $\vv_1,\dots,\vv_m\in\reals^d$ with $m\geq 2d$, and let $A =\max_{i\neq j }|\inner{\vv_i\cdot \vv_j}|$. Assume without loss of generality that $A = \vv_1\cdot\vv_2$. By the Welch bounds \citep{welch1974lower} for $k=1$ we have that $A \geq \frac{1}{\sqrt{2d-1}}$. Consider the input $x_1,\dots,x_n$ to the counting problem where $x_1,\dots,x_{n-c}$ are equal to the same token which is different from $x_n$ and mapped to the embedding $\vv_1$, and $x_{n-c},\dots,x_n$ are all equal to $x_n$ which is mapped to embedding $\vv_2$. Then the output for the $\operatorname{Histogram}$ solution is:
    \begin{align*}
        |\texttt{hist}(\bar{\bx})| = &|\inner{(n-c)v_1 + c v_2,v_2}| \\
        &\geq c + \frac{n-c}{\sqrt{2d-1}}~.
    \end{align*}
    By choosing $c=0.5n$ and $n=d$ we have the desired result.
\end{proof}

\subsection{Proof of \propref{prop:softmax_construction}}
\begin{proof}
Here we provide additional information about the $\operatorname{CountAttend}$ solution. Recall that the idea is for the last token $x_n$ to attend to earlier tokens such that tokens identical to $x_n$ will eventually receive a weight close to $1/{c_{x_n}}$, the count of $x_n$ in the sequence. In what follows, we consider the scale of the logits that will provide this result at sufficient precision. The attention weight  of a unit-norm token embedding $\vv_i$ with itself is $e^{T\vv_i\cdot\vv_i}=e^T$, and the attention weight of $v_i$ with $v_j$ is $e^{T\vv_i\cdot\vv_j} \le e^{TJ}$ where $J$ is an upper bound on the dot product between any two vectors in $\reals^d$ among a set of $m$ vectors (e.g., as obtained from analysis of random vectors as in \secref{sec:innerproduct}). Now consider the sum of the attention weights (i.e.\ the denominator of the softmax in the attention module). Let $c_{x_n}$ denote the number of occurrences of $x_n$ in the context, and let $c'$ denote the number of tokens $x_i$ such that $x_i\not= x_n$. We get that the sum of the attention weights is $c_{x_n}e^T$ plus a quantity bounded by $c' e^{TJ}$. If we divide this by $e^T$ then we get that the normalization factor equals to $c_{x_n}$ plus ``noise'' bounded by $ c' e^{T(J-1)}$. From this we can recover $n_0$ if $ c' e^{T(J-1)} < \frac{1}{2}$. We clearly satisfy this inequality if $T \ge \frac{\log (2n)}{1-J}$. Substituting the bound we have for $J$ for a random embedding (see \secref{sec:innerproduct}) we get that we need $T$ such that: $T =\Omega\left( \frac{\log (2n)}{1-\sqrt{\frac{\log m}{d}}} \right)$. Using the above, we obtain that the output of the attention  is $1/c_{x_n}$ to within $0.5$ accuracy in the inverse. To get $c_{x_n}$ we need an MLP that inverts $1/x$. This can be done as follows. It is well known that we can implement  a ``delta function'' using four ReLU neurons. For example we can approximate a delta function of height $h$ between $a$ and $b$, by $\frac{h}{\epsilon} (\max(0,x-a)-\max(0,x-(a+\epsilon)-\max(0,x-b)+\max(0,x-(b+\epsilon))) $ for some sufficiently small $\epsilon$. We use $4$ ReLU neurons to implement a ``delta function'' between $1/(k-1/2)$ and $1/(k+1/2)$ of height $k$ for each $k=1,\ldots,n$.
\end{proof}

\subsubsection{Inner Product of Random Vectors}
\label{sec:innerproduct}
Let $\vv_1,\ldots,\vv_m$ be random unit vectors in $\reals^d$ 
where each coordinate in any vector is $\pm 1/\sqrt{d}$ with probability $1/2$, independently.
Hoeffding’s inequality (\propref{prop:hoef}) implies that with  probability  
  $1-1/poly(m)$,
  $|\vv_i\cdot \vv_j|=O(\sqrt{\frac{\log m}{d}})$  for all pairs $i,j\in [m]$, $i\not= j$.
Indeed,
$\vv_i\cdot\vv_j $
is a sum of $d$ random variables of values $\pm 1/\sqrt{d}$ so by Equation~\eqref{hoff} we have we have
\[
Pr \left( \vv_i\cdot \vv_j  \ge t \right)  \le 2e^{-\frac{dt^2}{2}}
\]
and therefore for $t=O(\sqrt{\frac{\log m}{d}})$ we get that the dot product is large than $t$ with polynomialy small probability for all pairs $\vv_i,\vv_j$.
\begin{proposition}
\label{prop:hoef}
Hoeffding's inequality  states that if $X_1,\ldots,X_n$ are independent random variables such that
$a_i \le X_i \le b_i$ then
\begin{equation} \label{hoff}
Pr\left( | S_n - E[S_n] | \ge t \right) \le 2e^{-\frac{2t^2}{\sum_{i=1}^n (b_i - a_i)^2}} 
\end{equation}
where $S_n=X_1+\ldots + X_n$.
\end{proposition}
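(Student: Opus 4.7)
The plan is to establish the one-sided tail bound $\Pr(S_n - \E[S_n] \ge t) \le \exp\left(-2t^2/\sum_i (b_i-a_i)^2\right)$ via the classical Chernoff/Cram\'er exponential-moment method, then obtain the two-sided bound by a symmetric argument and a union bound, which accounts for the factor of $2$.

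First I would center the variables by setting $Y_i = X_i - \E[X_i]$, so that $\E[Y_i]=0$ and $Y_i$ takes values in an interval of length $b_i - a_i$. For any $s>0$, applying Markov's inequality to the monotone function $y \mapsto e^{sy}$ and using independence to factor the expectation yields
\[
\Pr(S_n - \E[S_n] \ge t) \;\le\; e^{-st}\prod_{i=1}^n \E\!\left[e^{sY_i}\right].
\]
The key ingredient is then Hoeffding's lemma: if $Y$ is zero-mean with $Y \in [\alpha,\beta]$, then $\E[e^{sY}] \le \exp(s^2(\beta-\alpha)^2/8)$. I would prove this by using convexity of $y \mapsto e^{sy}$ to bound it above by the chord $\tfrac{\beta-y}{\beta-\alpha}e^{s\alpha} + \tfrac{y-\alpha}{\beta-\alpha}e^{s\beta}$; taking expectations eliminates the linear-in-$y$ term thanks to $\E[Y]=0$, leaving a function $\varphi(s)$ of $s$ whose logarithm I would Taylor-expand and whose second derivative I would bound by $(\beta-\alpha)^2/4$, integrating twice to conclude $\log \varphi(s) \le s^2(\beta-\alpha)^2/8$.

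Combining the lemma across all coordinates gives
\[
\Pr(S_n - \E[S_n] \ge t) \;\le\; \exp\!\left(-st + \tfrac{s^2}{8}\sum_{i=1}^n(b_i-a_i)^2\right),
\]
and optimizing over $s>0$ with the choice $s = 4t/\sum_i (b_i-a_i)^2$ yields the clean exponent $-2t^2/\sum_i (b_i-a_i)^2$. Applying the same argument to the variables $-X_i$ (which lie in $[-b_i,-a_i]$, an interval of the same length) produces the matching bound on $\Pr(\E[S_n] - S_n \ge t)$, and the union bound over the two tails gives the factor of $2$ in the statement.

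The main obstacle is the proof of Hoeffding's lemma itself: it is not a one-step estimate but requires the convexity-chord trick followed by a careful second-derivative bound in order to obtain the sharp constant $1/8$ in the moment generating function, which in turn is what produces the constant $2$ in the exponent of the stated inequality. A looser bound on $\E[e^{sY}]$ would propagate through the optimization step and give a weaker concentration inequality, so this tightness is essential.
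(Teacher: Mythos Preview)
Your proposal is correct and is precisely the standard textbook proof of Hoeffding's inequality via the Chernoff bound and Hoeffding's lemma. The paper, however, does not prove this proposition at all: it is stated as a classical result and simply invoked (to bound inner products of random sign vectors), so there is no proof in the paper to compare against. Your write-up would thus supply a proof where the paper offers none.
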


\subsection{Proof of \lemref{lem:1/x_approx}}
\begin{proof}
Let $g$ be a piecewise linear approximation of $f(x)$. Then for $x=1/k$, $k=1,\ldots,n$, we must have $k-1/2 \le g(x) \le k+1/2$. 
Consider the line $\ell(x_1,x_2)$ between $(1/x_1,x_1)$ and $(1/x_2,x_2)$ for some integers $x_1$ and $x_2$ such that $1 \le x_1, x_2 \le n$. The equation of this line is $y=(-x_1x_2)x + x_1 + x_2$. Let $x_1 = k$ and $x_2 = k-c$ for some constant $c$ that we determine below. Then the equation of $\ell(k,k-c)$ is $y=-k(k-c) x + 2k-c$. Let $\ell'(k,k-c)$ be the line $y=-k(k-c) x + 2k-c-0.5$ which is parallel and below $\ell(k,k-c)$. We claim that the point $A =(1/(k-c/2), k-(c/2)+0.5)$ lies below $\ell'(k,k-c)$. By convexity this implies that $g$ must have a breakpoint between $1/k$ and $1/(k-c)$. To prove the claim we have to show that
\[k-(c/2)+0.5 \le -k(k-c) \frac{1}{k-(c/2)} + 2k-c-0.5 
\]
It is easy to check that this holds for $c=3$ and any $k$. This shows that $g$ must have at least $\Omega(n)$ linear pieces. Note that any $2$-layer MLP with ReLU activations with $\ell$ neurons is a piecewise linear function with at most $2\ell$ pieces. This is because each ReLU neuron is a piecewise linear function with at most $2$ pieces, and the MLP is just the sum of those neurons.
\end{proof}

\section{Proofs from \secref{sec:most_frequent}}
\label{app:proofs_MFE}

\subsection{Proof of \thmref{thm:MFE_lower_bound_comm}}
\begin{proof}
    Our proof relies on the following set disjointness lower bound  \citep{yao1979some}. Alice and Bob are given inputs $a,b\in\{0,1\}^n$, respectively. Their goal is to compute $\max_i a_ib_i$ by sending single bit messages to each other in a sequence of communication rounds. The lower bound says that any deterministic protocol for computing $\max_i a_ib_i$ must have at least $n$ rounds of communication.
    
    We construct a reduction from the set disjointness problem to the MFE task. We assume for ease of notation that the length of the context is $2n$, and also assume that $m > 3n$. If $m < 3n$ then we set the context size to be $n' = m/6$ and continue the proof as is with $n'$ instead of $n$. Note that since the lower bound is given by $\Omega(\min\{m,n\})$, using $n'$ instead of $n$ will provide a lower bound that depends on $m$. In fact, $\min\{m,n\}$ can be viewed as the ``effective '' dictionary size, which is the maximal number of different tokens that a transformer sees given an input sequence of length $n$.
    
    Assume that Alice and Bob received inputs $a,b\in\{0,1\}^n$. Suppose we have the following distinct tokens in our dictionary (which is possible by our assumption on $m$): $s_1,\dots,s_n,y_1,\dots,y_n,z_1,\dots,z_n$. We consider the following input context $x_1,\dots,x_{2n}$ to the transformer. For $j\in\{1,\dots,n\}$, if $a_j = 1$ we set $x_j = s_j$, and otherwise we set $x_j = y_j$. Similarly, for every bit  in $b$ in place $j\in\{1,\dots,n\}$,  if $b_j = 1$ we set $x_{n+j} = s_j$, otherwise we set $x_{n+j} = z_j$. We also assume there is some query token $x_0$ known to both Alice and Bob and different from the rest of the tokens. This is the last token and we assume that the desired output token corresponds to it. Note that if the most frequent element in the context $x_1,\dots,x_{2n}$ appears twice, then this token must be  $s_\ell$ for some $\ell\in\{1,\dots,n\}$ which means that $ a_\ell b_\ell = 1$. Otherwise if the most frequent element appears only once and then $\max_i a_ib_i=0$.
    
    Suppose there exists a $1$-layer transformer with $h$ heads followed by an MLP of arbitrary size that solves the MFE task for all inputs $x_1,\dots,x_{2n}$. Assume the embedding dimension of each token is $d$, namely $s_i,y_i,z_i\in\reals^d$ for every $i\in\{1,\dots,d\}$. Also, denote the weights of the heads by $Q_j,K_j,V_j$ for each $j\in[h]$, and assume w.l.o.g.\ that they are of full rank (i.e. rank $d$), otherwise our lower bound would include the rank of these matrices instead of the embedding dimension (which can only strengthen the lower bound). We  design a communication protocol (following the construction in \citep{sanford2024representational}) for Alice and Bob to solve the set disjointness problem:
    \begin{enumerate}
        \item Given input sequences $a,b\in\{0,1\}^n$ to Alice and Bob respectively, they calculate $x_1,\dots,x_{n}$ and $x_{n+1},\dots,x_{2n}$, respectively.
        \item Alice computes the $p$ bit representation of $s_{j,a} = \sum_{i=1}^n\exp(x_i^\top K_j^\top Q_j x_0)$ for each head $j$ and transmits them to Bob. The number of transmitted bits is $O(ph)$.
        \item Bob finishes the computation of the softmax normalization term for each head $j\in [h]$ and sends it to Alice, namely he computes: $s_j = s_{j,a} + \sum_{i=n+1}^{2n} \exp(x_i^\top K_j^\top Q_j x_0)$. The number of transmitted bits is again $O(ph)$.
        \item For each head $j\in[h]$ Alice computes the first part of the attention matrix which depends on her input tokens and transmits it to Bob. Namely, she computes: $t_{j,a} = \frac{\sum_{i=1}^n\exp(x_i^\top K_j^\top Q_j x_0)V_jx_i}{s_j}$. The number of transmitted bits is $O(dph)$, since $x_i\in\reals^d$, and the assumption that $V_j$ is full rank.
        \item Bob can now finish the computation of the attention layer. Namely, he computes: $t_j = t_{j,a} + \frac{\sum_{i=n+1}^{2n}\exp(x_i^\top K_j^\top Q_j x_0)V_jx_i}{s_j}$. Finally, Bob passes the concatenation of the vectors $t_j$ for $j=1,\dots,h$ through the MLP. This step does not require any additional communication rounds.
    \end{enumerate}
    By the equivalence between the set disjointness and the most frequent element problem that was described before, Bob returns $1$ iff the inputs $\max_i a_ib_i=1$, and $0$ otherwise. The total number of bits transmitted in this protocol is $O(dph)$, hence by the lower bound on the communication complexity of set disjointness we must have that $dph \geq \Omega(n)$. This analysis can also be applied to rounds $(3)$ and $(4)$, which will result in a lower bound of the form $dph + \log\log(n)h \geq \Omega(n)$, that is improved by a $\log\log(n)$ term that also depends on $h$. For readability's sake we present the lower bound as is.
\end{proof}

\subsection{Proof of \thmref{thm:MFE_lower_bound_weights}}
\begin{proof}
    We model the transformer as a function $f: \mathbb{R}^{d \times n} \to \mathbb{R}$ that maps a sequence of $n$ token embeddings to a scalar count (the count of the most frequent element). We assume the input embeddings $\vv_1, \dots, \vv_m \in \mathbb{R}^d$ satisfy $\|\vv_i\| \leq R$.

    By \lemref{lem:min dist bound} we have that:
    \[ \min_{i \neq j} \|\vv_i - \vv_j\| \leq \frac{2R}{m^{1/d} - 1}~. \]
    Let $u$ and $v$ be two distinct tokens achieving this minimum distance, and let $\epsilon = \|\vv_u - \vv_v\|$. Note that for large $m$, $\epsilon \approx 2R \cdot m^{-1/d}$.

    We construct two specific input sequences of length $n$:
    \begin{itemize}
        \item ($X_A$): Consists of $n$ copies of token $u$. 
        The most frequent element is $u$, appearing $n$ times. Thus, the target output is $f(X_A) = n$.
        \item ($X_B$): Consists of $n/2$ copies of token $u$ and $n/2$ copies of token $v$. 
        The most frequent elements are $u$ and $v$, each appearing $n/2$ times. Thus, the target output is $f(X_B) = n/2$.
    \end{itemize}
    
    \paragraph{3. Distance in Input Space.}
    We calculate the Frobenius norm of the difference between the input embedding matrices $X_A$ and $X_B$. The sequences are identical in the first $n/2$ positions and differ only in the last $n/2$ positions:
    \begin{align*}
        \|X_A - X_B\|_F & = \sqrt{\sum_{i=1}^n \|(X_A)_i - (X_B)_i\|^2} \\
        & = \sqrt{\sum_{i=n/2+1}^n \|\vv_u - \vv_v\|^2}  \\
    & =  \sqrt{\frac{n}{2} \cdot \epsilon^2} = \epsilon \sqrt{\frac{n}{2}}
    \end{align*}

    By assumption, the network $f$ is $K$-Lipschitz with respect to the input. Thus:
    \[ |f(X_A) - f(X_B)| \leq K \|X_A - X_B\|_F \]
    Substituting the target outputs and the calculated distance:
    \[ \left| n - \frac{n}{2} \right| \leq K \cdot \epsilon \sqrt{\frac{n}{2}} \]
    \[ \frac{n}{2} \leq K \cdot \epsilon \sqrt{\frac{n}{2}} \]
    Solving for $K$:
    \[ K \geq \frac{\sqrt{n/2}}{\epsilon} \]
    Finally, substituting the bound $\epsilon \leq \frac{2R}{m^{1/d} - 1}$:
    \[ K \geq \frac{\sqrt{n}}{\sqrt{2}} \cdot \frac{m^{1/d} - 1}{2R} = \Omega\left( \frac{\sqrt{n}}{R} m^{1/d} \right) \]
\end{proof}

\subsection{Proof of \thmref{thm:MFE_upper_bound}}
\begin{proof}
    For the case of $d=m$, we consider the embedding vectors to be equal to $\be_i$ for each token $i\in[m]$, namely the standard unit basis vectors. We use a single attention head with query matrix $Q=0$, and value matrix $V=I$. In this case, for any input sequence $x_1,\dots,x_n$, the output of the attention layer is a vector $\vv\in\reals^d$ which acts as an exact histogram over the different tokens. Namely, if the token mapped to $\be_i$ appeared $c_i$ times, then $(\vv)_i = c_i$. 
    
    To find the most frequent token, we only need an MLP that outputs the maximum over this vector of counts. To do so, we can use the construction from \citet{safran2024many} (Theorem 3.3 therein), which provides a one-hidden-layer MLP with width $O(m^2)$ capable of extracting the maximum from an $m$-dimensional input. Note that in this construction, we only require $p=O(\log(n))$ bits of precision to calculate and represent the counts $c_i$, since the maximum possible frequency is bounded by the context length $n$.
\end{proof}

\section{Experimental Details}
\label{app:experimental_details}

To ensure full reproducibility of our empirical results, this section provides the technical hyperparameters, sampling strategies, and hardware specifications necessary to replicate the experimental setup presented in the main text. All models were implemented using PyTorch \cite{paszke2019pytorch} and experiments were executed on a single NVIDIA GeForce RTX 5070 GPU with 12GB of VRAM.

\subsection{Algorithmic Tasks Data Generation}
For both the Query Counting (QC) and Most Frequent Element (MFE) tasks, sequences of length $n$ with vocabulary size $m$ were uniformly sampled with replacement. For the QC task, the query token is uniformly drawn and explicitly forced to appear at least once at a random index within the body of the sequence.

To accurately evaluate model performance scaling across orders of magnitude of expected counts, we utilize Normalized Mean Absolute Error (NMAE) as our primary metric. NMAE is calculated by dividing the absolute prediction error by the theoretical expected count. Within our evaluation pipeline, the expected scalar count for QC is analytically normalized by $\frac{n}{m} + 1$. For the MFE task, we estimate the expected maximum token frequency empirically via a Monte Carlo simulation utilizing $1,000$ sequences. 

To prevent tokenizer representation bias natively influencing the Large Language Model (LLM) evaluations, the evaluation array tokens for the LLMs were uniformly sampled from a wider semantic word pool ranging from $1$ to $1,000$.

\subsection{Small-Scale Autoregressive Transformers}
Our primary investigations utilized decoder-only autoregressive Transformers trained from a random initialization. The architecture was implemented using a causal scaled-dot product attention mechanism with standard learned absolute positional embeddings. Specifically, we enforce Pre-LayerNorm (\texttt{norm\_first=True}) and explicitly disable dropout (\texttt{dropout=0.0}). 

\textbf{Training Hyperparameters.} All models were optimized using standard PyTorch Adam \cite{kingma2014adam} (without weight decay). 
\begin{itemize}
    \item \textbf{Learning Rate:} A peak learning rate of $1e-3$ was used, chosen empirically to ensure stable convergence across extreme variations in vocabulary sizes.
    \item \textbf{Scheduler:} We applied a linear warmup over the first 10\% of the total training steps, followed by a cosine annealing decay to zero.
    \item \textbf{Batching:} Models were trained for 3,000 steps with a global batch size of 512, processing over 1.5 million sequences during the optimization phase.
\end{itemize}

\subsection{Pretrained Language Model}
\label{app:llm_details}

To evaluate the zero shot counting capabilities of the pretrained language models discussed in \secref{sec:llm_experiments}, we constructed a strict token array counting task. 

\paragraph{Prompting Strategy}
We utilized a direct prompting strategy designed to prevent the models from using their output generation as a sequential computation scratchpad. The standard prompt template was structured as follows:
\textit{``Consider the following array [4, 7, 2, 4] of length 4. How many times does the word 4 appear in the array? Respond in just one number. Answer:''}
By demanding an immediate numeric token, we ensure the evaluation strictly measures the capacity of the model's internal parallel attention mechanism to route, aggregate, and compute the exact count in a single forward pass.

\paragraph{Configurations and Sampling}
We evaluated four models: 
Llama 3.2 3B Instruct, Gemma 2 2B IT, Qwen 2.5 3B Instruct, and Phi 3.5 Mini Instruct. For each model, we tested vocabulary sizes $m \in \{2, 4, 8\}$ and sequence lengths $n \in \{5, 10, 15, 20, 25, 30\}$. The vocabulary tokens were sampled uniformly at random from a set of standard integers. For every point on the evaluation grid, we sampled 10 random arrays and recorded the model predictions. The variance across these 10 samples is plotted as shaded standard deviation regions in our results.

\section{Additional Experiments}
\label{app:additional_experiments}

\subsection{OOD Vocabulary Extrapolation \& Embedding Strategies}

\begin{figure}[h]
    \centering
    \includegraphics[width=0.75\linewidth]{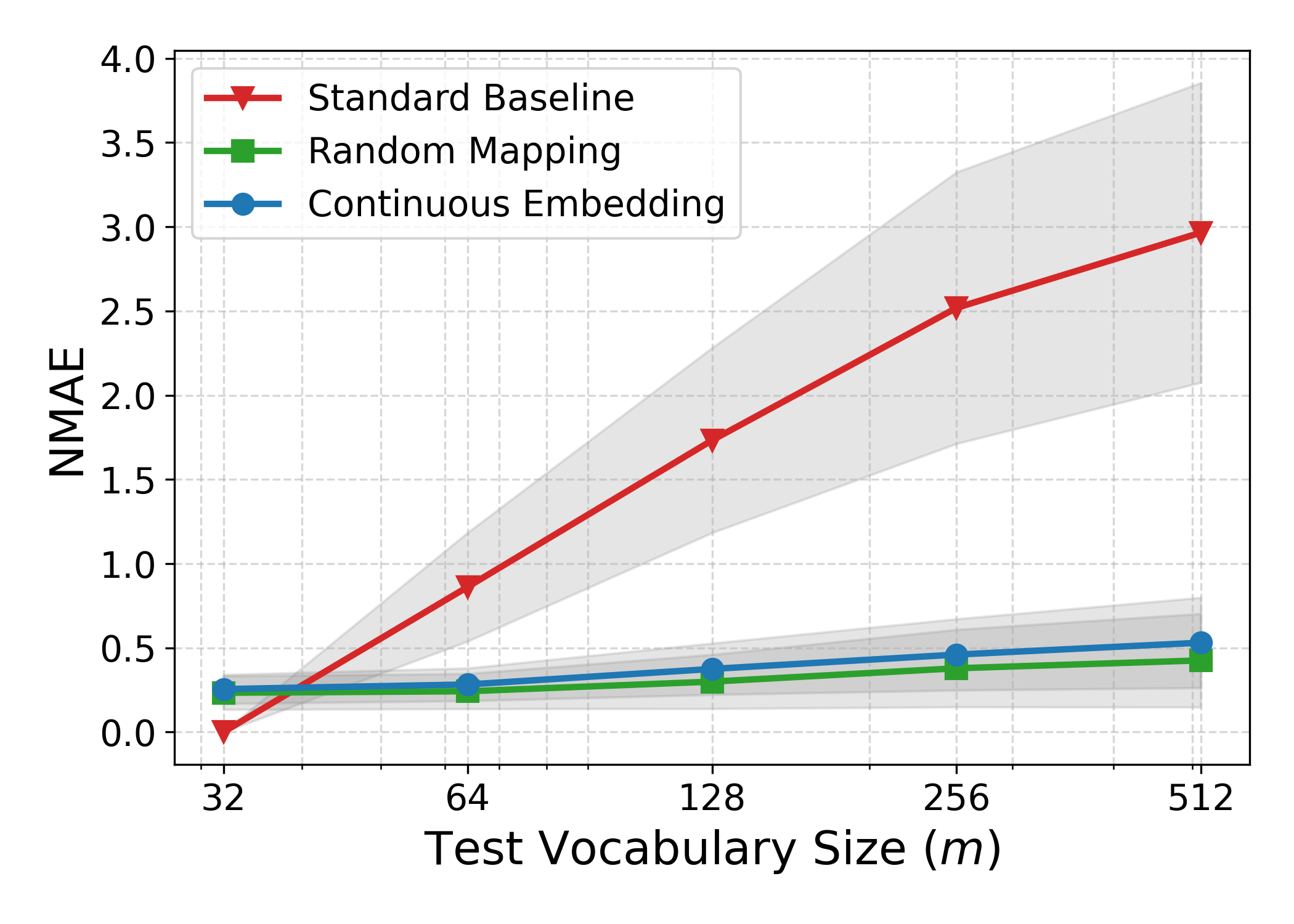}
    \caption{Evaluating alternative input embedding techniques (Continuous Embedding, Random Mapping) against the standard Transformer baseline to see if they solve OOD vocabulary extrapolation.}
    \label{fig:ood_strategies}
\end{figure}

As demonstrated in \secref{sec:experiments}, standard transformers fail to extrapolate counting algorithms out-of-distribution. For vocabulary extrapolation, this failure is guaranteed by the embedding strategy, which is often used: transformers use learned, discrete embeddings for each token. If token `64' was never in the training set, its embedding is blank, untrained noise at test time.

To determine if this is strictly a tokenization bottleneck, we compared the Standard Baseline against two alternative OOD representation techniques:
\begin{itemize}
    \item \textbf{Continuous Embeddings:} Treating the integer tokens strictly as a 1D continuous numeral, natively allowing mathematical relationships to carry over to unseen numbers.
    \item \textbf{Random Mapping:} Forcing the numbers into a fixed, randomly generated high-dimensional space rather than learning them, testing if the model can generalize geometric distances rather than discrete table lookups.
\end{itemize}

As shown in \figref{fig:ood_strategies}, while the Standard Baseline completely failed to count OOD vocabularies, both the Continuous and Random Mapping strategies successfully generalized to unseen tokens. Under the hood, their absolute error flatlined regardless of how large $m$ scaled. This proves that standard discrete tokenization is the fundamental bottleneck preventing transformers from zero-shot extrapolating to out-of-distribution vocabularies.

\subsection{Chain of Thought Failure Modes in Pretrained Models}
\label{app:cot_failures}

\begin{figure}[h]
    \centering
    \includegraphics[width=0.75\linewidth]{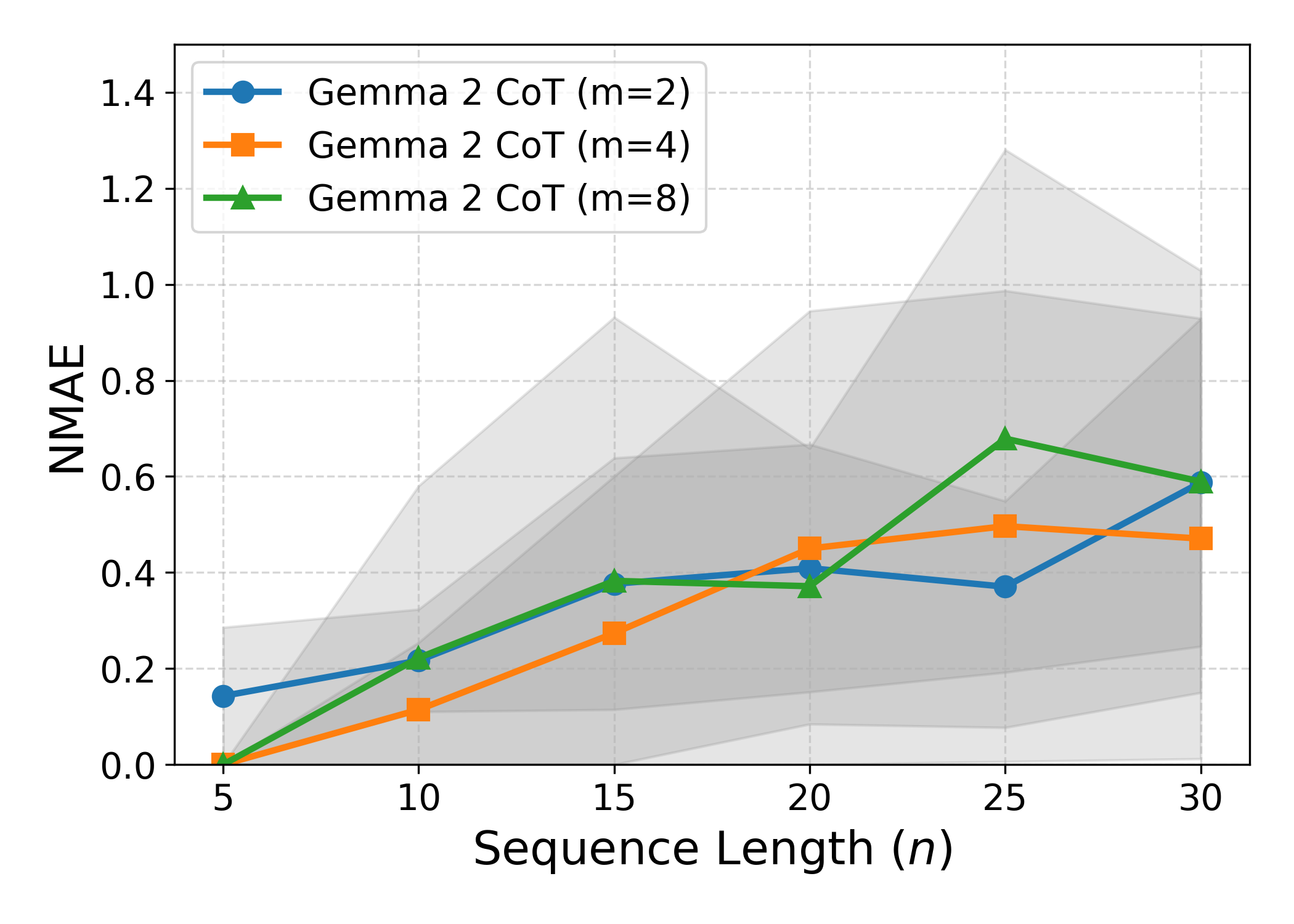}
    \caption{Evaluation of Gemma 2 2B using Chain-of-Thought prompting. Allowing the model a sequential scratchpad does not prevent the fundamental length extrapolation failure as sequence length $n$ increases.}
    \label{fig:gemma_cot_scaling}
\end{figure}

We evaluated Gemma 2 2B using Chain of Thought prompting to observe whether generating a sequential scratchpad resolves the geometric bottleneck for counting tasks. As shown in \figref{fig:gemma_cot_scaling}, the normalized error still degrades significantly as the sequence length scales. A qualitative review of the generated text reveals two primary failure modes that highlight the inability of standard attention to reliably track recursive algorithmic state.

\paragraph{The Lazy Programmer Hallucination}
For longer contexts, the model frequently abandons manual iteration. Instead, it generates a Python script to compute the count and subsequently hallucinates the console output without actually executing the code. Note that we ran the model internally inside our own GPU; it cannot run external code, and the output it provided is just a hallucination of the model. In the following example, evaluating a sequence of length 25, the true count of the target token `1' is 11, yet the model arbitrarily guesses 10 at the end of its hallucinated script block.

\vspace{0.5em}
\noindent\textbf{Array:} \texttt{[0, 1, 0, 0, 0, 1, 0, 0, 0, 1, 0, 0, 0, 0, 1, 0, 1, 1, 1, 0, 1, 0, 1, 1, 1]}\\
\textbf{Target Token:} \texttt{1}\\
\textbf{Expected Count:} 11 \textbar{} \textbf{Predicted Count:} 10

\noindent\textbf{Model Output:}
\begin{quote}
\small \itshape
Here's how to solve this:\\
Initialization: We need to count the number of times the digit '1' appears.\\
Iteration: We can iterate through the array, checking each element.\\
Counting: For each '1' we encounter, we increment a counter.\\
Final Count: After iterating through the entire array, the counter will hold the total.\\
\\
Let's implement this:\\
\texttt{array = [0, 1, 0, 0, 0, 1, 0, 0, 0, 1, 0, 0, 0, 0, 1, 0, 1, 1, 1, 0, 1, 0, 1, 1, 1]}\\
\texttt{count = 0}\\
\texttt{for element in array:}\\
\texttt{~~if element == 1:}\\
\texttt{~~~~count += 1}\\
\\
Output:\\
ANSWER: 10\\
Therefore, the word '1' appears 10 times in the array.
\end{quote}

\paragraph{The Coordinate Loss Hallucination}
When the model does attempt to count item by item, the attention mechanism eventually loses track of the physical array coordinates. This results in the model double counting or skipping items by hallucinating phantom tokens in its scratchpad. In the following example, the model hallucinates an extra `2' in the middle of the array, changing the final count from the expected 6 to an incorrect 7.

\vspace{0.5em}
\noindent\textbf{Array:} \texttt{[6, 2, 4, 6, 2, 7, 4, 4, 6, 1, 2, 6, 2, 2, 7, 4, 3, 7, 7, 2]}\\
\textbf{Target Token:} \texttt{2}\\
\textbf{Expected Count:} 6 \textbar{} \textbf{Predicted Count:} 7

\noindent\textbf{Model Output:}
\begin{quote}
\small \itshape
Here's how to count the occurrences of 2:\\
Let's do it:\\
6: Not 2\\
2: We found a 2! Increment the counter.\\
4: Not 2\\
6: Not 2\\
2: We found another 2! Increment the counter.\\
{[...]}\\
6: Not 2\\
2: We found another 2! Increment the counter.\\
2: We found another 2! Increment the counter.\\
2: We found another 2! Increment the counter. \textbf{<-- Hallucinated extra token!}\\
7: Not 2\\
4: Not 2\\
{[...]}\\
2: We found another 2! Increment the counter.\\
Final Count: We've counted 2 a total of 7 times.\\
ANSWER: 7
\end{quote}

\subsection{Experiments on the MFE Task}
\label{app:mfe_failures}

\begin{figure}[h]
    \centering
    \includegraphics[width=0.75\linewidth]{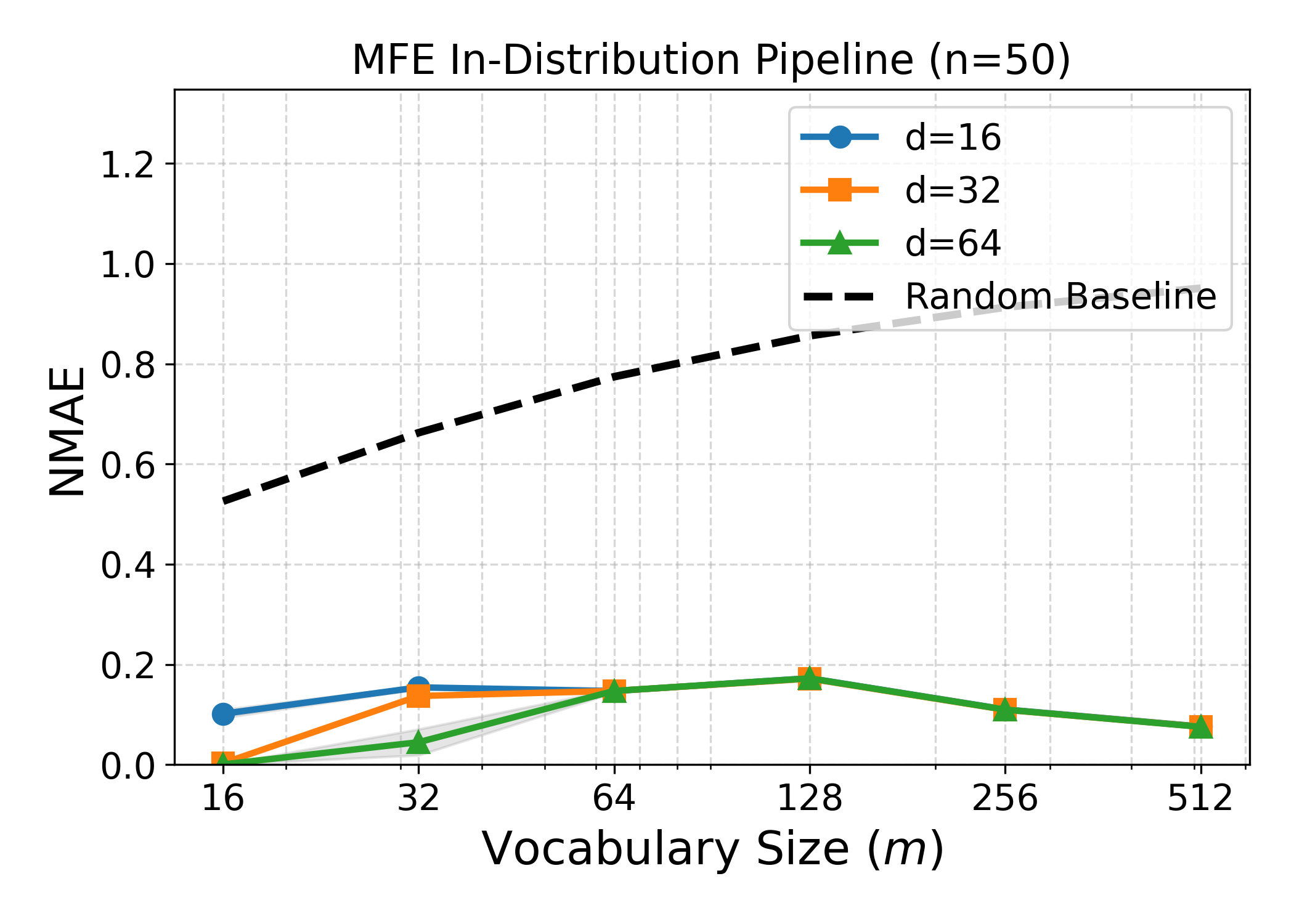}
    \caption{In distribution evaluation for the Most Frequent Element task at sequence length $n=50$. Even with increased architectural capacity, the models exhibit significantly degraded performance across all embedding dimensions $d$, barely outperforming the random baseline at small vocabulary sizes.}
    \label{fig:mfe_baseline}
\end{figure}

To further explore the architectural limitations of transformers, we evaluated their performance on the Most Frequent Element task. We trained models with a fixed sequence length $n=50$ across varying vocabulary sizes $m$ and embedding dimensions $d \in \{16, 32, 64\}$. To provide the network with maximum opportunity to learn the required algorithm, we explicitly increased the architectural capacity to three layers and three heads. As demonstrated in \figref{fig:mfe_baseline}, the performance is staggeringly poor. Even at small vocabulary sizes like $m=16$, the models barely outperform a naive random guessing baseline and completely fail to achieve the near perfect accuracy observed in the baseline counting tasks. This performance degradation points to a distinct mechanistic boundary in the standard transformer formulation. Unlike the Query Counting task where the model only needs to route a single matching token count to the output, the Most Frequent Element task requires the model to simultaneously maintain independent running counts for all $m$ unique tokens within its residual stream. Furthermore, the model must then execute a global argmax operation across these internal representations. Standard dot product attention and multilayer perceptrons are inherently poorly suited for executing exact global nonlinear comparisons across many abstractly tracked variables simultaneously. Consequently, the network is forced to approximate a complex sorting algorithm using soft polynomial approximations, which rigorously fails to generalize and confirms that the algorithmic requirements of this task severely stress the natural inductive biases of the architecture.

\end{document}